\pdfminorversion=4
\documentclass[journal,transmag]{IEEEtran}
\usepackage{soul}
\usepackage{url}
\usepackage[hidelinks]{hyperref}
\usepackage[utf8]{inputenc}
\usepackage[small]{caption}
\usepackage{graphicx}
\usepackage{epstopdf}
\usepackage{amsmath}
\usepackage{amssymb}
\usepackage{amsthm}
\usepackage{booktabs}
\usepackage[noend]{algpseudocode}
\urlstyle{same}
\usepackage{amsfonts}
\usepackage{bm}
\usepackage{subfigure}
\usepackage{cite}
\usepackage{multirow}
\usepackage{longtable}
\usepackage[ruled]{algorithm2e}
\usepackage{color}
\usepackage{arydshln}

\newtheorem{theorem}{Theorem}

\begin{document}
\title{Universal Multi-Source Domain Adaptation}

\author{\IEEEauthorblockN{Yueming Yin, 
Zhen Yang,~\IEEEmembership{Senior Member,~IEEE}, 
Haifeng Hu, and 
Xiaofu Wu}
\thanks{The authors are with Nanjing University of Posts and Telecommunications, Nanjing 210003, China.

Zhen Yang is with Key Lab of Broadband Wireless Communication and Sensor Network Technology, Ministry of Education, Nanjing University of Posts and Telecommunications, Nanjing 210003, China.

Haifeng Hu and Xiaofu Wu are with National Engineering Research Center of Communications and Networking, Nanjing University of Posts and Telecommunications, Nanjing 210003, China.

Corresponding author: Yueming Yin (email: 1018010514@njupt.edu.cn) and Zhen Yang (email: yangz@njupt.edu.cn).}}

\maketitle

\begin{abstract}
Unsupervised domain adaptation enables intelligent models to transfer knowledge from a labeled source domain to a similar but unlabeled target domain. Recent study reveals that knowledge can be transferred from one source domain to another unknown target domain, called Universal Domain Adaptation (UDA). However, in the real-world application, there are often more than one source domain to be exploited for domain adaptation. In this paper, we formally propose a more general domain adaptation setting, universal multi-source domain adaptation (UMDA), where the label sets of multiple source domains can be different and the label set of target domain is completely unknown. The main challenges in UMDA are to identify the common label set between each source domain and target domain, and to keep the model scalable as the number of source domains increases. To address these challenges, we propose a universal multi-source adaptation network (UMAN) to solve the domain adaptation problem without increasing the complexity of the model in various UMDA settings. In UMAN, we estimate the reliability of each known class in the common label set via the prediction margin, which helps adversarial training to better align the distributions of multiple source domains and target domain in the common label set. Moreover, the theoretical guarantee for UMAN is also provided. Massive experimental results show that existing UDA and multi-source DA (MDA) methods cannot be directly applied to UMDA and the proposed UMAN achieves the state-of-the-art performance in various UMDA settings.
\end{abstract}

\begin{IEEEkeywords}
Universal domain adaptation, multi-source domain adaptation, universal multi-source domain adaptation, universal multi-source adaptation network, prediction margin.
\end{IEEEkeywords}

\section{Introduction}
\label{Introduction}
\IEEEPARstart{I}{n} the past years, the development of deep learning greatly promotes the research on supervised image classification \cite{qi2008two,joshi2012scalable,mensink2013distance,akata2013good,hayat2014deep,akata2015label,rao2018runtime}. At present, many large-scale image classification datasets are publicly available, and deep learning models performs very well in many scenarios. However, large-scale image annotation is time-consuming and even prohibitive. This requires that deep learning models trained on the labeled dataset can generalize to unseen but similar unlabeled data. If unlabeled data and labeled data have different characteristics independent of label, they have been sampled from two different domains \cite{busto2018open}. Therefore, domain adaptation (DA) aims to mitigate the impact of the domain gap during the process of knowledge transfer. A very challenging setting is unsupervised DA, where the source domain is fully labeled and the target domain is unlabeled. This kind of domain gap brings great challenge for the DA methods to transfer knowledge from source domain to target domain.

\begin{figure}
\centering
\includegraphics[height=1.8in,width=0.8\linewidth]{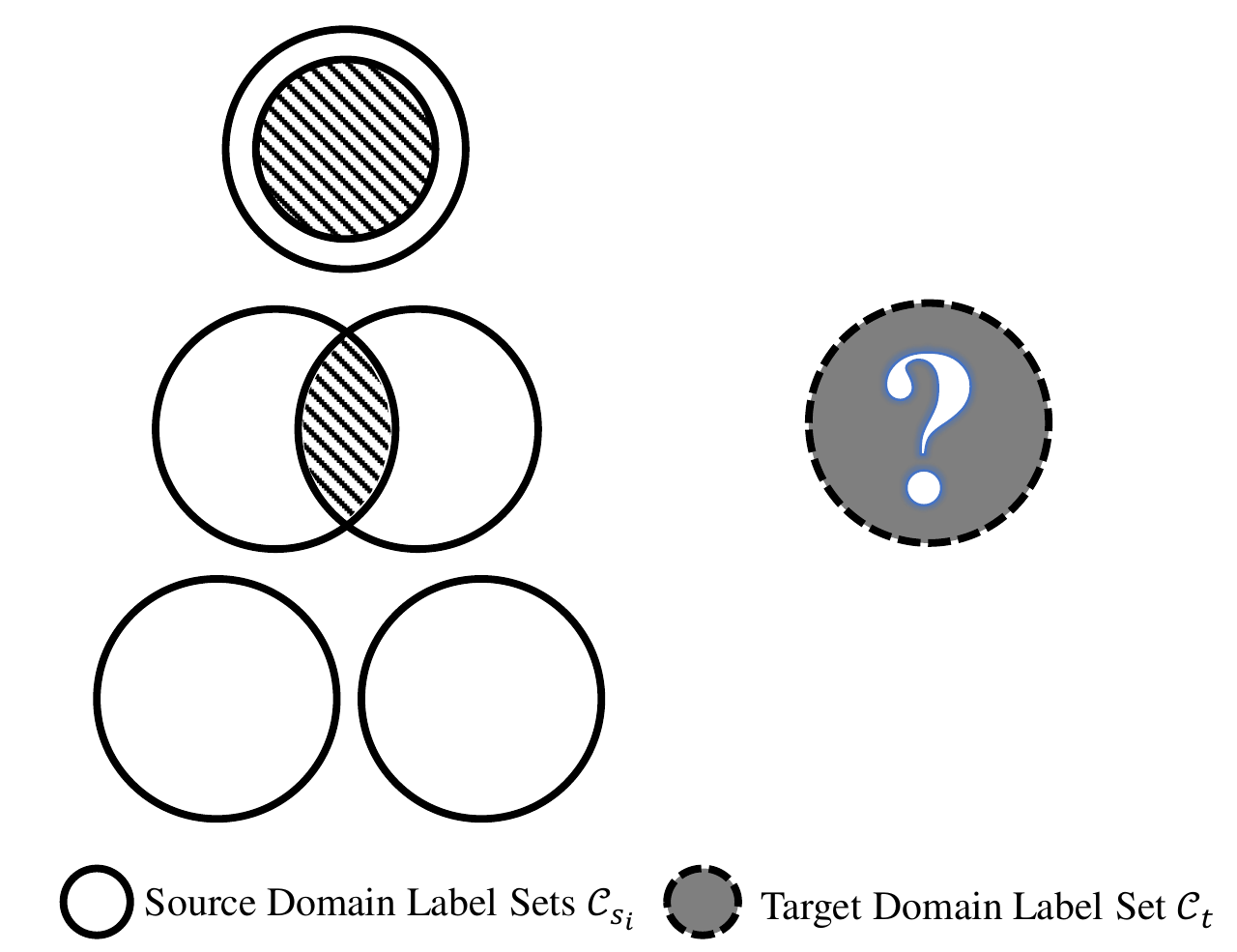}
\caption{Universal Multi-Source Domain Adaptation.}
\label{UMDA}
\end{figure}

Depending on whether the label set of the target domain is known, existing unsupervised DA methods fall into two types: non-universal DA and universal DA \cite{you2019universal,abs-2001-05071, abs-2002-07953, kundu2020universal}. And non-universal DA methods generally include closed set DA \cite{tzeng2014deep,long2015learning,ganin2016domain,haeusser2017associative,tzeng2015simultaneous,long2016unsupervised,DBLP:conf/cvpr/TzengHSD17,saito2018maximum,long2018conditional,Ben-DavidBCKPV10,bousmalis2017unsupervised,sankaranarayanan2018generate,hu2018duplex,liu2018detach,murez2018image,huang2018auggan,DBLP:conf/cvpr/VolpiMSM18,chadha2019improved,chen2020domain,chen2019graph,li2020generating,li2020aligning}, partial DA \cite{DBLP:conf/cvpr/CaoL0J18,zhang2018importance,cao2018partialeccv} and open set DA \cite{panareda2017open, saito2018open,luo2020progressive}. These non-universal DA settings suppose that all label sets of source and target domains are known in advance, which is relatively simple to deal with. Recently, the universal DA has been receiving more attention due to the fact that the label set of target domain is completely unknown. However, existing universal DA methods \cite{you2019universal,abs-2001-05071, abs-2002-07953, kundu2020universal} focus on single-source scenarios and cannot be directly applied to multi-source scenarios. For example, we have a source domain with label set A and another source domain with label set B, and these two label sets A and B are different. If we want to train a classifier to identify samples of the target domain belonging to the label set A and B, the better way is to pre-train the classifier on both of these two source domains. However, in the combined dataset, domain gap exists within the multiple distinct source domains. Unlike single-source DA \cite{Ben-DavidBCKPV10}, the same source distribution assumption doesn't hold in the case of multiple source domains. To this end, we propose a more general setting of DA as shown in Fig. \ref{UMDA}, called universal multi-source domain adaptation (UMDA). In this setting, multiple source domains are available and the label set of target domain is completely unknown. The label set of source domains is known but the relationship between any two source label sets might be contained, intersected, or even disjoint. Therefore, more knowledge is expected to be transferred from multiple source domains to the target domain, and the classifier trained on multiple source domains can be capable of identifying more classes in the target domain.

Differing from universal single-source domain adaptation \cite{you2019universal} (USDA), UMDA poses two new challenges. First, in order to identify a known class in the target domain, we have to consider the domain discrepancy not only between each source domain and target domain, but also between any two distinct source domains. Second, as the number of source domains increases, the relationship between label sets of source and target domains becomes more complex. Some multi-source adaptation methods \cite{xu2018deep, zhao2018adversarial, ZhaoWZGLS0HCK20,zhao2020madan,li2020multi,lin2020multi,li2020mutual,PengBXHSW19,0002ZWCMG18} have been proposed. However, these methods cannot provide the promising performance in the target domain. Furthermore, with the increase of the number of source domains, the model complexity of these methods makes it difficult to implement in an efficient way.

To tackle these two challenges, we design a Universal Multi-Source Adaptation Network (UMAN) with a novel weighting mechanism to identify the common classes of source and target domains, which mitigates the domain gaps between a large number of source domains and an unknown target domain without increasing the complexity of the model. Specifically, we first introduce the \emph{margin vector} to UMDA, which defined as the empirical prediction margin with respect to each pseudo-label as shown in Fig. \ref{motivation}. Prediction margin measures the confidence in assigning a target sample to its pseudo-label. Therefore, a pseudo-label with a larger prediction margin is more likely to be a common class between source and target domains. It can be seen that we can accurately identify the common label set by using margin vector related operations. In UMAN, we also design a joint cross-entropy loss and a joint adversarial loss to align not only the distributions between the source domains but also between the source and target domains in their common label set. Note that in the joint adversarial loss, each class in different source domains is weighted by the same \emph {margin vector}, so the complexity of UMAN become insensitive to the number of source domains. 

Moreover, we provide theoretical analysis of the two joint losses in UMAN, and obtain two theoretical guarantees. First, optimizing the joint cross-entropy loss is equivalent to aligning the distribution of source domains in their common label set. Second, optimizing the joint adversarial loss is equivalent to aligning the distribution between the center of multiple source domains and target domain in their common label set.

The main contributions of this paper can be summarized as:
\begin{enumerate}
\item We propose a more practical Universal Multi-Source Domain Adaptation (UMDA) setting, where the label sets of multiple source domains can be different and the label set of target domain is completely unknown. Thus, the knowledge of multiple source domains can be transferred to target domain in an unsupervised way.
\item We propose a Universal Multi-Source Adaptation Network (UMAN) with a novel margin vector for UMDA. The common label set can be identified via the margin vector, which helps adversarial training to better align the distributions of multiple source domains and target domain in an efficient way, and does not incur extra model complexity in various UMDA settings. Moreover, we make theoretical analysis of the two losses of UMAN, which provides the theoretical guarantees for alignment of distribution of source and target domains in the common label set.
\item Massive experimental results show that the proposed UMAN works stably across various UMDA settings and outperforms the state-of-the-art methods by large margins.

\end{enumerate}
\begin{figure}
\centering
\includegraphics[height=1.8in,width=1\linewidth]{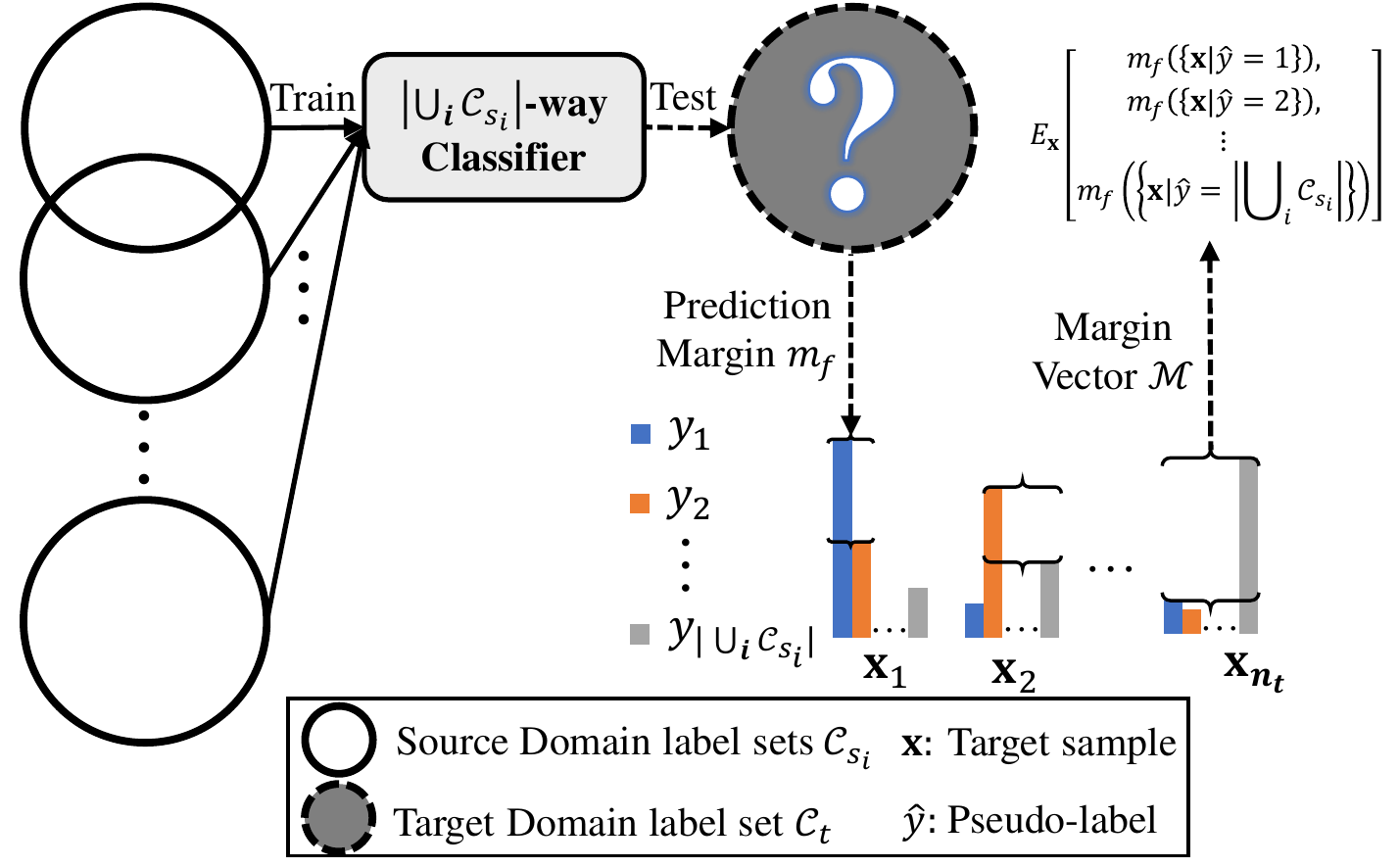}
\caption{An illustration of margin vector. We employ the prediction margin as the reliability of pseudo-label in the common label set.}
\label{motivation}
\end{figure}

\section{Related Work}
We briefly review recent domain adaptation methods in this section. According to the number of source domains, these methods fall into two categories: single-source domain adaptation and multi-source domain adaptation. In single-source domain adaptation, according to the constraint on the relationship between source and target label sets, methods are divided into closed set domain adaptation, partial domain adaptation, open set domain adaptation and universal domain adaptation. In multi-source domain adaptation, existing methods primarily focus on closed set setting.

\subsection{Single-Source Domain Adaptation}
\textbf{Closed Set Single-Source Domain Adaptation.} Closed set single-source domain adaptation focuses on mitigating the impact of the domain gap between source and target domains. Closed set single-source domain adaptation methods mainly contain two categories: feature adaptation and generative model. Feature adaptation methods \cite{tzeng2014deep,long2015learning,ganin2016domain,haeusser2017associative,tzeng2015simultaneous,long2016unsupervised,DBLP:conf/cvpr/TzengHSD17,saito2018maximum,long2018conditional,chen2020domain,li2020aligning} diminish the feature distribution discrepancy between source and target domains by minimizing statistical distances, such as $\mathcal{H}$-divergence \cite{Ben-DavidBCKPV10}, $\mathcal{H} \Delta \mathcal{H}$-divergence \cite{BlitzerCKPW07} and MMD \cite{long2015learning,chen2019graph}. Methods based on generative models \cite{bousmalis2017unsupervised,sankaranarayanan2018generate,hu2018duplex,liu2018detach,murez2018image,huang2018auggan,DBLP:conf/cvpr/VolpiMSM18,chadha2019improved,li2020generating} synthesize target samples as data augmentation and match domains in pixel or feature levels.

\textbf{Partial Single-Source Domain Adaptation.} Partial single-source domain adaptation (PSDA) \cite{DBLP:conf/cvpr/CaoL0J18,zhang2018importance,cao2018partialeccv} transfers a learner from a big source domain to a small target domain, where the source label set contains the target label set. To solve PSDA, Cao \textit{et al}. \cite{DBLP:conf/cvpr/CaoL0J18} utilized multiple domain discriminators with class-level and instance-level weighting mechanism to perform per-class adversarial distribution matching. Zhang \textit{et al}. \cite{zhang2018importance} constructed an auxiliary domain discriminator to estimate the probability of a source sample being similar to the target domain. Cao \textit{et al}. \cite{cao2018partialeccv} further improved PSDA by employing an adversarial network and jointly applying class-level weighting on the source classifier.

\textbf{Open Set Single-Source Domain Adaptation.} Busto \textit{et al}. \cite{panareda2017open} first proposed open set single-source domain adaptation (OSSDA). The classes private to both domains are recognized as an “unknown” class. They use an Assign-and-Transform-Iteratively (ATI) algorithm to map target samples to source classes and then train SVMs for final classification. Saito \textit{et al}. \cite{saito2018open} modified OSSDA by requiring no data of the source private label set and extends the source classifier by adding an “unknown” class and trains it adversarially among classes. Recently, Luo \textit{et al}. \cite{luo2020progressive} addressed conditional shift by constructing graph neural networks for OSSDA.

\textbf{Universal Single-Source Domain Adaptation.} For these aforementioned single-source DA methods, prior knowledge about label sets is assumed to be known. You \textit{et al}. \cite{you2019universal} first proposed universal single-source domain adaptation (USDA), which requires no prior knowledge on the label sets. In \cite{you2019universal}, a novel criterion was proposed to quantify the transferability of each sample and weight samples during domain adversarial training. Recently, Lifshitz \textit{et al}. \cite{abs-2001-05071} adopted a sample selection approach to promote the performance for USDA. Saito \textit{et al}. \cite{abs-2002-07953} solved USDA problem through self-supervision. Kundu \textit{et al}. \cite{kundu2020universal} proposed a source-free USDA method for real-time adaptation.

\subsection{Closed Set Multi-Source Domain Adaptation}
Existing methods for multi-source domain adaptation primarily focus on a relatively simple application scenario. That is, the target data is unlabeled but available during the training process, the source data is fully labeled, the source and target data are observed in the same data space, and the label sets of all sources and the target are the same \cite{abs-2002-12169}. Here, we call these methods as closed set multi-source domain adaptation (CSMDA). The idea behind CSMDA is based on the seminal theoretical model \cite{BlitzerCKPW07,Ben-DavidBCKPV10}. Mansour \textit{et al}. \cite{MansourMR08} assumed that the target distribution can be approximated by a mixture of the multiple source distributions. Therefore, weighted combination of source classifiers has been widely employed for CSMDA. Moreover, tighter cross domain generalization bound and more accurate measurements on domain discrepancy can provide intuitions to derive effective CSMDA algorithms. Hoffman \textit{et al}. \cite{HoffmanMZ18} derived a novel bound using DC-programming and calculated more accurate combination weights. Zhao \textit{et al}. \cite{zhao2018adversarial} extended the generalization bound of seminal theoretical model to multiple sources under both classification and regression settings. Besides the domain discrepancy between the target and each source \cite{HoffmanMZ18,zhao2018adversarial}, Li \textit{et al}. \cite{LiMDC18} also considered the relationship between pairwise sources and derived a tighter bound on weighted multi-source discrepancy.

In closed set multi-source domain adaptation, category shift was first formally considered in \cite{xu2018deep}, where each source domain may not contain all the classes in the target domain. However, the union of all source label sets is the same as target label set, which can be still categorized as closed set multi-source domain adaptation.

\section{Our Approach}
In this section, we introduce the scenario of Universal Multi-Source Domain Adaptation (UMDA) and then propose a novel Universal Multi-Source Adaptation Network (UMAN) for this scenario.
\subsection{Problem Setting}
\label{Problem Setting}
In Universal Multi-Source Domain Adaptation, $M$ source domains $\mathcal{D}_{s_i}=\left\{\left(\mathbf{x}_{j}^{s_i}, y_{j}^{s_i}\right)\right\},i=1,2,\cdots,M$ consisting of $n_{s_i}$ labeled samples and a target domain $\mathcal{D}_{t}=\left\{\left(\mathbf{x}_{k}^{t}\right)\right\}$ of $n_t$ unlabeled samples are provided at training. Note that the source data are sampled from distribution $p_i$ while the target data from distribution $q$. We use $\mathcal{C}_{s_i}$ to denote the label set of source domain $\mathcal{D}_{s_i}$ and $\mathcal{C}_t$ the label set of target domain. $\mathcal{C}_i=\mathcal{C}_{s_i} \cap \mathcal{C}_{t}$ is the common label set shared by $\mathcal{D}_{s_i}$ and $\mathcal{D}_{t}$. $\overline{\mathcal{C}}_{s_i}=\mathcal{C}_{s_i} \setminus \mathcal{C}_i$ and $\overline{\mathcal{C}}_{t}=\mathcal{C}_{t} \setminus \{\bigcup_{i} \mathcal{C}_i\}$ represent the label set private to the source domain $\mathcal{D}_{s_i}$ and the target domain $\mathcal{D}_{t}$ respectively. And $\mathcal{C}_s=\bigcup_{i}\mathcal{C}_{s_i}$ is the label set which contains all the known labels in source domains, $\mathcal{C}=\bigcup_{i}\mathcal{C}_i$ is the union of common label sets between each source domain and the target domain, and $\overline{\mathcal{C}}_s=\bigcup_{i}\overline{\mathcal{C}}_{s_i}$ is the common label set between source domains that does not appear in target domain. $p_{\mathcal{C}_i}$ and $p_{\overline{\mathcal{C}}_{s_i}}$ are used to denote the distribution of source data with labels in the label set $\mathcal{C}_i$ and $\overline{\mathcal{C}}_{s_i}$ respectively, and $q_{\mathcal{C}}$, $q_{\mathcal{C}_i}$, $q_{\overline{\mathcal{C}}_t}$ for target distributions with labels in the label set $\mathcal{C}$, $\mathcal{C}_i$, $\overline{\mathcal{C}}_t$ respectively. Note that the target data are fully unlabeled, and the target label sets (inaccessible at training) are only used for defining the UMDA problem.

The definition of commonness between two domains introduced in \cite{you2019universal} is the Jaccard distance of two label sets: $\xi=\frac{\left|\mathcal{C}_{s} \cap \mathcal{C}_{t}\right|}{\left|\mathcal{C}_{s} \cup \mathcal{C}_{t}\right|}$. Closed set domain adaptation is a special case of universal domain adaptation when $\xi=1$. In UMDA, Jaccard distance is modified as $\xi_i=\frac{\left|\mathcal{C}_{s_i} \cap \mathcal{C}_{t}\right|}{\left|\mathcal{C}_{s_i} \cup \mathcal{C}_{t}\right|}$ associated with the $i$-th source domain and the target domain. Similarly, $\xi_{ij}=\frac{\left|\mathcal{C}_{s_i} \cap \mathcal{C}_{s_j}\right|}{\left|\mathcal{C}_{s_i} \cup \mathcal{C}_{s_j}\right|}$ is defined as Jaccard distance between source label sets. Both domain gaps and category gaps exist in UMDA setting. In other words, domain gaps exist not only between each source domain and target domain but also between any two source domains, i.e. $p_{\mathcal{C}_i} \neq q_{\mathcal{C}_i}$ and $p_i \neq p_j$. The main task for UDA is to mitigate the impact of $p_{\mathcal{C}_i} \neq q_{\mathcal{C}_i}$. Meanwhile, the learning model should distinguish between target samples coming from known classes in $\mathcal{C}$ and unknown classes in $\overline{C}_{t}$. Finally, the model should be learned to minimize the target risk in $\mathcal{C}=\bigcup_{i}\mathcal{C}_i$, i.e. $\min \mathbb{E}_{(\mathbf{x}, y) \sim q_{\mathcal{C}}}[f(\mathbf{x}) \neq y]$.

\subsection{Margin Vector}
We first introduce a class-wise weighting mechanism guaranteed by margin theory to UMDA. The margin between features and the classification surface makes an important impact on designing generalizable classifier. Therefore, a margin theory for classification was developed by \cite{koltchinskii2002empirical} and \cite{zhang2019bridging}, where the 0-1 loss for classification is replaced by the marginal loss. Differ from \cite{koltchinskii2002empirical} and \cite{zhang2019bridging}, the prediction margin in this paper is to quantify the probability of a target label belonging to the common label set. In particular, we define the \emph{margin} of a hypothesis predictor (a $|\mathcal{C}_s|$-way classifier) $f$ at a pseudo-labeled example $\mathbf{x}$ of target domain as
\begin{equation}
\begin{aligned}
m_{f}(\mathbf{x}) &\triangleq f(\mathbf{x},\hat{y})-\max _{y \neq \hat{y}} f\left(\mathbf{x}, y\right),
\label{margin}
\end{aligned}
\end{equation}
\begin{equation}
\begin{aligned}
\hat{y}&\triangleq\mathop{\arg\max} _{y\in [1,|\mathcal{C}_s|]}f\left(\mathbf{x}, y\right),
\label{pseudo}
\end{aligned}
\end{equation}where $f\left(\mathbf{x}, y\right)$ is the classification probability of $\mathbf{x}$ belonging to the $y$-th class, and $\hat{y}$ is the pseudo-label of $\mathbf{x}$. This \emph{margin} measures the confidence in assigning an example to its pseudo-label. In particular, wrong pseudo-labeled samples and samples of unknown classes will have a small margin, where the classification surface intersects here. If $m_{f}(\mathbf{x})=1$, $\mathbf{x}$ is considered most likely to come from known classes. Similarly, if $m_{f}(\mathbf{x})=0$, $\mathbf{x}$ is most likely to come from unknown classes.

Then, the empirical \emph{margin vector} over a data distribution $\mathcal{D}$ is defined as
\begin{equation}
\begin{aligned}
\mathcal{M}(\mathcal{D},f)\triangleq\mathbb{E}_{\mathbf{x}\in\mathcal{D}}\left[m_{f}(\{\mathbf{x}|\hat{y}=1\}),\cdots,m_{f}(\{\mathbf{x}|\hat{y}=|\mathcal{C}_s|\})\right]^T.
\label{mv}
\end{aligned}
\end{equation} In this definition, the $i$-th dimension of the \emph{margin vector} is the empirical \emph{margin} of samples with the $i$-th pseudo-label. In UMDA, \emph{margin vector} can be used to calculated over the classifier trained on each source domain or the classifier trained on the integrated source domains, which obtains the reliability of each source classes belonging to the common label set $\mathcal{C}_i$ or $\mathcal{C}=\bigcup_{i}\mathcal{C}_i$ respectively.

\subsection{Target Margin Register and Transferability Criterion}
We first introduce a target margin register (TMR) to UMDA. The TMR is formulated as a $\mathcal{C}_{s}$-dimensional vector $\mathbf{V}_{TMR}$, which is updated in each training step. The updated rule is defined as
\begin{equation}
\begin{aligned} 
\mathbf{V}^{t+1}_{TMR}=\frac{1}{t+1}(t\times \mathbf{V}^{t}_{TMR}+ \mathcal{M}(\mathcal{D}^{b}_{t},f)),
\label{updated rule}
\end{aligned}
\end{equation}
\begin{equation}
\begin{aligned} 
\mathbf{V}^{0}_{TMR}={\underbrace{[0,0,\cdots,0]}_{|\mathcal{C}_{s}|}}^T,
\label{init}
\end{aligned}
\end{equation}where $\mathcal{M}(\mathcal{D}^{b}_{t},f)$ is the \emph{margin vector}, defined in Eq. \ref{mv}, which outputs a $\mathcal{C}_{s}$-dimensional \emph{margin vector} over $\mathcal{D}^{b}_{t}$ and $f$, $\mathcal{D}^{b}_{t}$ denotes a batch of target samples from $\mathcal{D}_{t}$, $f$ denotes the prediction function of the classifier. Note that $t\times \mathbf{V}^{t}_{TMR}$ is the accumulated \emph{margin vector} of previous $t$ steps, $t$ is the updating step and $t<T$, where $T$ is the maximum step number for training. $\mathbf{V}^{t}_{TMR}$ denotes the TMR-vector at step $t$. In Eq. \ref{updated rule}, the first term is equal to the accumulated \emph{margin vector} in the previous steps, the second term is the \emph{margin vector} of $\mathcal{D}^{b}_{t}$ and $f$ in the current training step. The definition of TMR is to calculate the average of the \emph{margin vector} corresponding to all batches of target samples during training.

Each dimension of $\mathbf{V}_{TMR}$ represents the empirical prediction margin, which means the level of reliability that the corresponding label in the source domain belonging to the label set of the target domain. Therefore, this margin can be directly used to weight a source sample $\mathbf{x}_{j} \in \mathcal{D}_{s}$ by its labels $y_j$:
\begin{equation}
\begin{aligned}
w^{s}(y_j)=\mathbf{V}_{TMR}[y_j],
\label{wsv}
\end{aligned}
\end{equation}where $w^{s}(y_j)$ indicates the probability of the source label $y_j$ belonging to the common label set $\mathcal{C}$, and $\mathbf{V}_{TMR}[y_j]$ is the dimension of $\mathbf{V}_{TMR}$ corresponding to $y_j$. Note that in Eq. \ref{wsv}, TMR-vector, shared by all source domains, has $|\mathcal{C}_s|$ components associated with all the known labels in source domains. Each class in different source domains is weighted by the same TMR-vector, so the complexity of UMAN become insensitive to the number of source domains. 

In the same way, \emph{margin vector} is utilized to weight target samples with respect to their pseudo-labels. Considering the confidence in assigning a target sample to its pseudo-label varies between target samples, the ultimate probability of each target sample $\mathbf{x}_k$ belonging to the union of common label sets $\mathcal{C}=\bigcup_{i}\mathcal{C}_i$ is defined by both class-wise and sample-wise reliabilities:
\begin{equation}
\begin{aligned}
w^t(\mathbf{x}_k,f)=m_f(\mathbf{x}_k)\cdot\mathbf{V}_{TMR}[\hat{y}_k],
\label{wtv}
\end{aligned}
\end{equation}where $m_f(\mathbf{x}_k)$ is the \emph{margin} of a target sample $\mathbf{x}_k$ calculated by the classifier $f$ trained on the source domains in Eq. \ref{margin}. $\mathbf{V}_{TMR}[\hat{y}_k]$ is the dimension of $\mathbf{V}_{TMR}$ corresponding to $\hat{y}_k$, and $\hat{y}_k$ is the pseudo label of $\mathbf{x}_k$ defined in Eq. \ref{pseudo}.

Unlike \cite{you2019universal}, we define the weighting mechanism in terms of class rather than individual sample in the source domain, and consider both class-wise weights and sample-wise reliabilities in the target domain. \textbf{Essentially}, for all source domains, samples in the same class should share a common weight, when the probability of a class belonging to $\mathcal{C}$ can be estimated. And the weight of classes in the common label set should be consistent through all the domains. Note that $w^{s}(y_j)$ (or $w^t(\mathbf{x}_k,f)$) is further normalized as \cite{you2019universal}.

\subsection{Universal Multi-Source Adaptation Network}
The main challenge of UMDA is to mitigate the impact of $p_{\mathcal{C}_i}\neq q_{\mathcal{C}_i}$. Unlike \cite{guo2018multi,zhu2019aligning,rakshit2019unsupervised,peng2019moment,xu2018deep,DBLP:conf/aaai/ZhaoWZGLS0HCK20}, we use a shared feature extractor, a shared classifier and a shared discriminator to solve UMDA problem. This means the proposed UMAN can work on a large number of source domains without increasing complexity. To this end, let $F: \mathbb{R}^{l\times w}\rightarrow \mathbb{R}^d$ be the feature extractor, and $G: \mathbb{R}^d\rightarrow \{0,1,\cdots,|\mathcal{C}_{s}|-1\}$ be the classifier. Here, $l\times w$ and $d$ denote the dimension of input samples and feature representations respectively. Input $\mathbf{x}$ from multiple domains are forwarded into $F$ to obtain the feature $\mathbf{z}=F(\mathbf{x})$, then, $\mathbf{z}$ is L2-normalized and fed into $G$ to estimate the classification probability $G^{c}(\mathbf{z})$ of $\mathbf{x}$ over the $c$-th class in $\mathcal{C}_s$. The probabilities of source input $\mathbf{x}_i$ are reserved to calculate the joint cross-entropy loss $E_{G}$ with labels in all source domains:
\begin{equation}
\begin{aligned}
E_{G}=\frac{1}{M}\sum_{i=1}^M\mathbb{E}_{(\mathbf{x}, y) \sim p_i} L\left(y, G\left(F(\mathbf{x})\right)\right),
\label{Lc}
\end{aligned}
\end{equation}where $L(\cdot,\cdot)$ denotes the cross-entropy loss, $F\left(\mathbf{x}\right)$ is the L2-normalized feature. The optimization on Eq. \ref{Lc} is equivalent to aligning the distributions between any two source domains in their common label sets, which is guaranteed by Theorem \ref{theorem1} (see below).

Let $D: \mathbb{R}^d\rightarrow \{0,1\}$ be the binary classifier discriminating input feature $\mathbf{z}$ from the source domain or the target domain. Inspired by \cite{you2019universal}, the domain classifier $D$ can be trained by weighting $w^{s}(y_j)$ defined in Eq. \ref{wsv} and $w^t(\mathbf{x}_k,G\circ F)$ defined in Eq. \ref{wtv} as
\begin{equation}
\begin{aligned} 
E_{D}=-\frac{1}{M}\sum_{i=1}^M&\mathbb{E}_{(\mathbf{x}, y) \sim p_i} w^{s}(y) \log D\left(F(\mathbf{x})\right)
\\ -\mathbb{E}_{\mathbf{x} \sim q} w^{t}&(\mathbf{x},G\circ F) \log \left(1-D\left(F(\mathbf{x})\right)\right),
\label{Ld}
\end{aligned}
\end{equation}where $G\circ F$ is the entity of the classifier $f$ in Eq. \ref{wtv}, and $M$ is the number of source domains. Through the above adversarial learning, the center of the distributions of source domains and the distribution of target domain are aligned in $\mathcal{C}$ as precisely as possible, which is guaranteed by Theorem \ref{theorem2}. Note that the distributions between any two source domains in their common label set are aligned according to Theorem \ref{theorem1}. Therefore, all distributions of source and target domains match each other well in the common label set.

\begin{figure}
\centering
\includegraphics[width=1\linewidth]{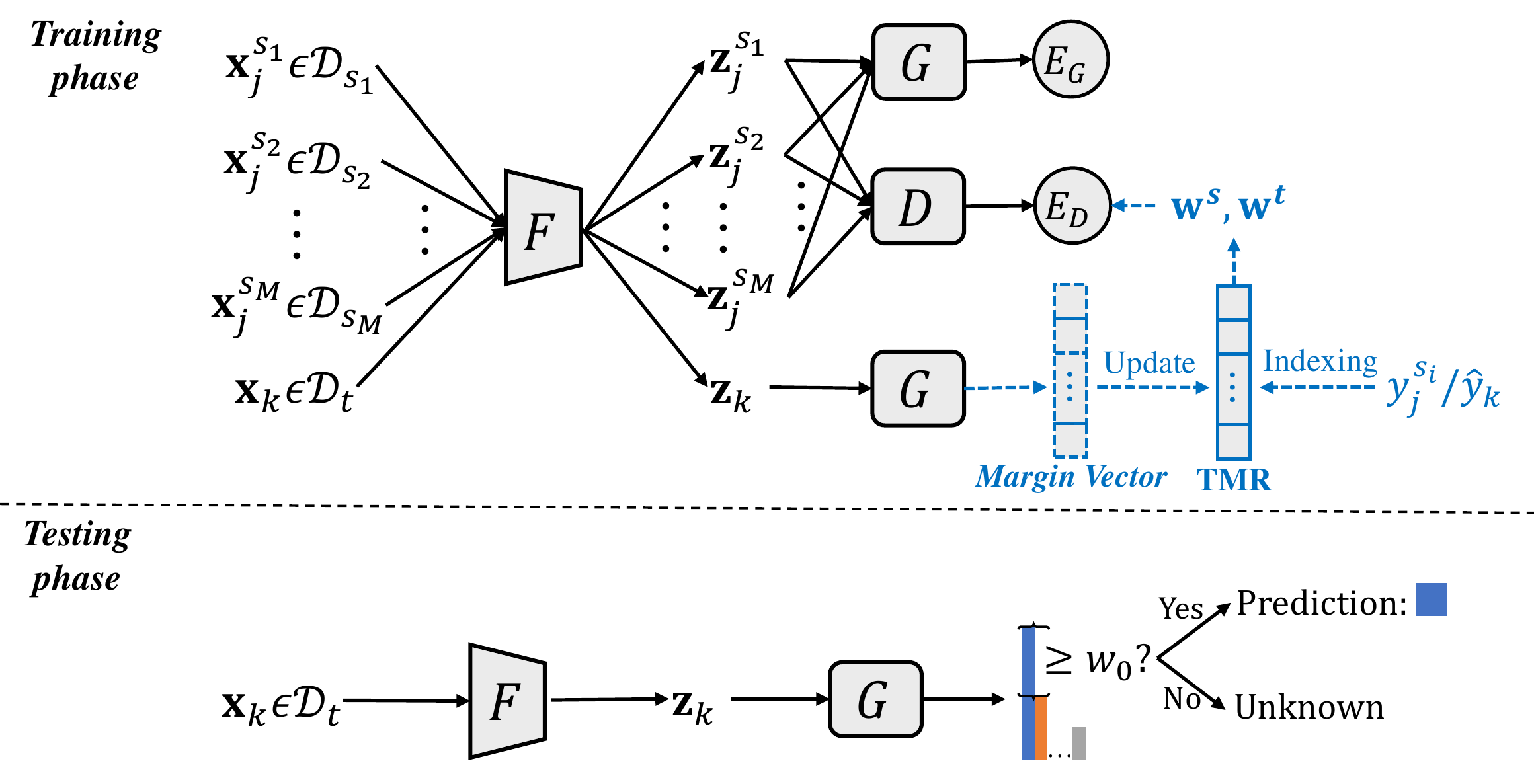}
\caption{The training and testing phases of proposed \textbf{UMAN}.}
\label{model}
\end{figure}

\textbf{Optimization.} The whole training procedure can be written as:
\begin{equation}
\begin{aligned}
\max _D \min _{F, G} E_{G}-E_{D}.
\label{optimization}
\end{aligned}
\end{equation} This optimization objective is very simple but effective. We use the gradient reversal layer \cite{ganin2016domain} to reverse the gradient between $F$ and $D$, this allows the optimization of all modules in an end-to-end way. Note that both $E_G$ and $E_D$ calculate expectations on multiple source domains. Even if multiple source domains are merged into a mixed source domain, UMAN still works. This means that UMAN can complete the domain adaptation process in the case of the domain gap among multiple source domains.

\textbf{Inference.} Finally, a target test sample $\mathbf{x}$ is assigned to either corresponding known class of the source domains or the unknown class:
\begin{equation}
\begin{aligned}
y(\mathbf{x})=\left\{\begin{array}{ll}
{\arg \max}_{c\in \mathcal{C}_s} G^{c}(F(\mathbf{x})), & m_{G\circ F}(\mathbf{x})\geq w_0 \\
\text {unknown}, & \text { otherwise }.
\end{array}\right.
\label{inference}
\end{aligned}
\end{equation}where $m_{G\circ F}(\mathbf{x})$ is the \emph{margin} of the test sample $\mathbf{x}$ calculated by the entity of classifier $G\circ F$ according to Eq. \ref{margin}, $w_0$ is the threshold for separating unknown-class samples from known-class samples. The whole model is shown in Fig. \ref{model}, the algorithm of UMAN is shown in Algorithm \ref{alg}, and the theoretical guarantees of UMAN are provided in Section \ref{Theoretical Guarantees for UMAN}. Note that in UMAN algorithm, we update $\mathbf{V}_{TMR}$ only when the max source error lower than a threshold $\epsilon$.

\begin{algorithm}[h]
\caption{UMAN Algorithm}
\label{alg}
\LinesNumbered
\KwIn{$T$: max iteration\;$w_0$: the threshold for separating unknown-class samples from known-class samples\;
$\mathcal{D}_{s_i}=\{\textbf{x}^{s_i}_{j},y^{s_i}_j\}_{j=1}^{n_{s_i}},i=1,2,\cdots,M$: source training datasets\; 
$\mathcal{D}_t=\{ \textbf{x}_{k}\}_{k=1}^{n_t}$: target training dataset\; 
$F$: pretrained feature extractor parameterized by $\theta _f$\; 
$G$: randomly initialized classifier parameterized by $\theta _g$\; 
$D$: randomly initialized domain classifier parameterized by $\theta _d$\; 
}
\KwOut{$\theta_f $, $\theta _g $ and $\theta _d$.}
\textbf{Training}:\\
set $t=0,\epsilon=0.1$ and initialize $V_{TMR}=V^0_{TMR}$\;
\While{$t<T$}{
\For{each batch $(\mathcal{D}^{b}_{s_1},\mathcal{D}^{b}_{s_2},\cdots,\mathcal{D}^{b}_{s_M}, \mathcal{D}^{b}_{t})$ in $(\mathcal{D}_{s_1},\mathcal{D}_{s_2},\cdots,\mathcal{D}_{s_M}, \mathcal{D}_t)$}{
obtain extracted features on each domain: $\textbf{z}^{s_i}_j=F(\textbf{x}^{s_i}_j)$ and $\textbf{z}_k=F(\textbf{x}_k)$\;
obtain classification results using softmax: $G(\textbf{z}^{s_i}_j)$ and $G(\textbf{z}_k)$\;
calculate the margin $m_{G\circ F}(\textbf{x}_k)$ by Eq. \ref{margin} and \ref{pseudo}\;
obtain the \emph{margin vector} $\mathcal{M}(\mathcal{D}^{b}_{t},F\circ G)$ by Eq. \ref{mv}\;
calculate source error ($\mathbf{1}\{\cdot\}=1$ if $\{\cdot\}$ is true): $error(\mathcal{D}_{s_i})=\frac{1}{n_{s_i}}\sum_{j=1}^{n_{s_i}}\mathbf{1}\{G(F(\mathbf{x}_j))\neq y_j\}$\;
\If{$\max_i error(\mathcal{D}_{s_i})<\epsilon$}{update TMR by Eq. \ref{updated rule} and \ref{init}\;}
weight $\textbf{x}^{s_i}_j$ using TMR-vector by Eq. \ref{wsv}\;
weight $\textbf{x}_k$ using TMR-vector and \emph{margin} by Eq. \ref{wtv}\;
calculate $E_{G}$ by Eq. \ref{Lc} and $E_{D}$ by Eq. \ref{Ld}\;
update $\theta_f $, $\theta _g $ and $\theta _d$ by optimizing Eq. \ref{optimization} using statistical gradient descent\;
let $t\leftarrow t+1$\;}}
\textbf{Testing}:\\
Test data $\textbf{x}$ is forwarded to obtain $G^c(F(\textbf{x}))$ over $c\in\mathcal{C}_s$\;
\eIf{$m_{G\circ F}(\textbf{x})\geq w_0$}{
label $\textbf{x}$ with $y=\arg\max_c G^c(F(\textbf{x}))$\;}
{reject $\textbf{x}$ as an unknown class\;}
\end{algorithm}

\subsection{Theoretical Guarantees of UMAN}
\label{Theoretical Guarantees for UMAN}
In this section, we provide theoretical analysis of the proposed UMAN and show how it works. As shown in Eq. \ref{optimization}, the objective function of UMAN consists of two parts: $E_G$ and $E_D$. For samples drawn from $M$ source domains, UMAN employs the same category classifier $C$ and domain classifier $D$, we have the following theorems:

\begin{theorem}
Let $p_{ij}$ be the distribution of the $i$-th source domain with labels in the label set $\mathcal{C}_{s_i}\cap\mathcal{C}_{s_j}$, $\overline{p}_{i}$ be the distribution of the $i$-th source domain in the private label set. Let $G'$ be an auxiliary classifier with the same structure as $G$. Minimize $E_G$ with respect to $F$ and $G$ is equivalent to 
\begin{equation}
\begin{aligned}
\max _{G'} \min _{F,G} -\frac{1}{M}\sum_{i=1}^{M-1} \sum_{j=i+1}^{M}&\mathbb{E}_{(\mathbf{x}, y) \sim p_{ij}}\log G'^y(F(\mathbf{x}))\\
-&\mathbb{E}_{(\mathbf{x}, y) \sim p_{ji}}\log (1-G'^y(F(\mathbf{x})))\\
-\frac{1}{M}\sum_{i=1}^M&\mathbb{E}_{(\mathbf{x}, y) \sim \overline{p}_i} \log G^y(F(\mathbf{x})).
\label{theorem1.0}
\end{aligned}
\end{equation}
\label{theorem1}
\end{theorem}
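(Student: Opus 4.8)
The plan is to start from the definition of the joint cross-entropy loss in Eq.~\ref{Lc}, rewrite it as a sum of log-probability terms using $L(y,G(F(\mathbf{x})))=-\log G^y(F(\mathbf{x}))$, and split each source expectation according to whether the label lies in a pairwise-shared set $\mathcal{C}_{s_i}\cap\mathcal{C}_{s_j}$ or in the private set $\overline{\mathcal{C}}_{s_i}$. The private portion should match the last term of Eq.~\ref{theorem1.0} almost immediately: restricting $p_i$ to $\overline{\mathcal{C}}_{s_i}$ yields $\overline{p}_i$, and the cross-entropy reproduces $-\frac{1}{M}\sum_i \mathbb{E}_{(\mathbf{x},y)\sim\overline{p}_i}\log G^y(F(\mathbf{x}))$ verbatim. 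The work therefore concentrates on the common-label portion.

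For the common-label portion I would introduce the auxiliary classifier $G'$ and treat the problem one shared class at a time. Fix a pair $(i,j)$ and a label $y\in\mathcal{C}_{s_i}\cap\mathcal{C}_{s_j}$; the relevant data are the class-$y$ features drawn from $p_{ij}$ (source $i$) and from $p_{ji}$ (source $j$). The idea is to recognize the paired terms $\mathbb{E}_{p_{ij}}\log G'^y(F(\mathbf{x}))+\mathbb{E}_{p_{ji}}\log(1-G'^y(F(\mathbf{x})))$ as a two-sample adversarial discrimination objective, in which $G'$ plays the role of a within-class domain discriminator separating source $i$ from source $j$. Carrying out the inner maximization over $G'$ by the standard optimal-discriminator argument gives $G'^{y*}=\frac{p_{ij}}{p_{ij}+p_{ji}}$ in feature space and collapses the paired terms to a Jensen--Shannon divergence $2\,\mathrm{JSD}(p_{ij}\Vert p_{ji})$ up to an additive constant. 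The subsequent outer minimization over $F$ then drives this divergence to zero, i.e. forces $F$ to align the two source distributions on their common class, which is exactly the alignment claimed in the discussion following Eq.~\ref{Lc}.

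To close the equivalence I would characterize the minimizers of $E_G$ itself and match them to those of the max--min problem. Minimizing $E_G$ with the shared softmax classifier $G$ forces, at optimum, correct class assignment on every source --- giving the private term directly --- and, on each pairwise-common class, a feature representation under which the shared classifier assigns a common posterior to samples of both sources; I would argue that this shared-posterior condition is equivalent to the vanishing of the per-class JS divergence obtained above, so the two optimization problems induce the same optimal feature geometry and the same classifier behaviour. Summing the matched contributions over all unordered pairs $i<j$ and over the private sets, and tracking the $\frac{1}{M}$ normalization, then reproduces Eq.~\ref{theorem1.0} term by term.

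The main obstacle is the middle step: establishing rigorously that minimizing the multi-class cross-entropy $E_G$ with a single shared head actually \emph{enforces} feature alignment on the common classes rather than merely admitting it. A shared softmax classifier can in principle label the same class correctly from two disjoint feature regions, so the implication ``$E_G$ minimal $\Rightarrow p_{ij}=p_{ji}$ in feature space'' must be made tight through the auxiliary-discriminator reformulation, and one must check that the optimal $G'^{y*}=\frac{p_{ij}}{p_{ij}+p_{ji}}$ is consistent with the classifier $G$ retained in the private term. A secondary difficulty is bookkeeping: a single label may belong to several intersections $\mathcal{C}_{s_i}\cap\mathcal{C}_{s_j}$ at once, so the overlap of these sets has to be handled carefully when re-summing the common-label contributions, to avoid double-counting them against the single cross-entropy on the left-hand side.
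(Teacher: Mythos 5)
Your plan follows essentially the same route as the paper's own proof: the same three-way decomposition of $E_G$ into pairwise-shared terms ($p_{ij}$, $p_{ji}$) plus private terms (the private part transferring verbatim, as you say), and the same reinterpretation of the paired shared-class terms as a two-sample discrimination objective in which the auxiliary head $G'$ acts as a within-class domain discriminator. Two differences are worth recording. First, you push the adversarial reading further than the paper does here: the optimal-discriminator computation $G'^{y*}=p_{ij}/(p_{ij}+p_{ji})$ and the collapse to $2\,\mathrm{JSD}(p_{ij}\,\|\,p_{ji})$ appear nowhere in the paper's proof of Theorem~\ref{theorem1} (the paper reserves that style of argument for Theorem~\ref{theorem2}); instead the paper argues only that the two shared terms have gradients in the same direction, that their common optimum is $G^y(F(\mathbf{x}))=1$ on both $p_{ij}$ and $p_{ji}$, and that the features are then ``indistinguishable for $G$,'' at which point it simply substitutes the max--min objective. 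Second, the ``main obstacle'' you flag is genuine, and the paper does not resolve it either: the inference from ``both distributions are confidently and correctly classified by the shared head'' to ``the pushforward feature distributions coincide'' is precisely the unjustified assertion in the paper's proof --- as you observe, $G^y(F(\mathbf{x}))=1$ can hold on two disjoint feature regions, so indistinguishability to $G$ does not imply $F(p_{ij})=F(p_{ji})$. Your bookkeeping worry about labels lying in several intersections is dispatched in the paper by fiat: it proves the theorem only under a disjointness assumption on the pairwise overlaps and asserts that the general case ``can be proved similarly.'' In short, your proposal is as complete as the published proof, and the gap you honestly identify is a gap in that proof as well, not a defect specific to your approach.
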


\begin{proof}
For simplicity, we only prove Theorem \ref{theorem1} when the label sets of every two source domains are disjoint. If they intersect, Theorem \ref{theorem1} can also be proved similarly. We divide $E_G$, defined in Eq. \ref{Lc}, into three parts:
\begin{equation}
\begin{aligned}
E_G=-\frac{1}{M}\sum_{i=1}^{M-1} \sum_{j=i+1}^{M}&\mathbb{E}_{(\mathbf{x}, y) \sim p_{ij}}\log G^y(F(\mathbf{x}))\\
-&\mathbb{E}_{(\mathbf{x}, y) \sim p_{ji}}\log G^y(F(\mathbf{x}))\\
-\frac{1}{M}\sum_{i=1}^M&\mathbb{E}_{(\mathbf{x}, y) \sim \overline{p}_i} \log G^y(F(\mathbf{x})).
\label{theorem1.1}
\end{aligned}
\end{equation}

The gradients of the first two parts have the same direction when minimizing $E_G$ for $G$ and $F$. And the optimal solution for both two parts is
\begin{equation}
\begin{aligned}
(G\circ F)^*(x)=\{G\circ F|\forall(\mathbf{x}, y) \sim p_{ij},G^y(F(\mathbf{x}))=1\},
\label{G*}
\end{aligned}
\end{equation}for all $i,j=1,\cdots,M$ and $i\neq j$. When $G\circ F=(G\circ F)^*$, the distributions of $F(\mathbf{x}\sim p_{ij})$ and $F(\mathbf{x}\sim p_{ji})$ are forced to be indistinguishable for $G$. Therefore, the first two parts of Eq. \ref{theorem1.1} can be taken as an equivalent objective for $F$:
\begin{equation}
\begin{aligned}
\max _{G'} \min _{F} -\frac{1}{M}\sum_{i=1}^{M-1} \sum_{j=i+1}^{M}&\mathbb{E}_{(\mathbf{x}, y) \sim p_{ij}}\log G'^y(F(\mathbf{x}))\\
-&\mathbb{E}_{(\mathbf{x}, y) \sim p_{ji}}\log (1-G'^y(F(\mathbf{x}))).
\end{aligned}
\end{equation}

Here, $G'$ plays a role of a discriminator to force $F$ to extract indistinguishable features of samples drawn from $p_{ij}$ and $p_{ji}$. Hence, the distributions of any two source domains are aligned in their common label set. 
\end{proof}

\begin{theorem}
The max-min of $E_D$ with respect to $D$ and $F$ is equivalent to minimizing the Jensen-Shannon Divergence between the center of the distributions of source domains $\mathcal{D}_{s_i}=\left\{\left(\mathbf{x}_{j}^{s_i}, y_{j}^{s_i}\right)\right\},i=1,2,\cdots,M$ and the distribution of target domain $\mathcal{D}_{t}=\left\{\left(\mathbf{x}_{k}^{t}\right)\right\}$ in the union common label set $\mathcal{C}=\bigcup_{i}\mathcal{C}_i$, i.e. 
\begin{equation}
\begin{aligned}
&\max _D \min _{F} -E_{D}\\
\iff&\min _{F} \mathrm{JSD}(\frac{1}{M}\sum_{i=1}^MF(p_{\mathcal{C}_i}(\mathbf{x}_j)) \| F(q_{\mathcal{C}}(\mathbf{x}_k))).
\end{aligned}
\end{equation}
\label{theorem2}
\end{theorem}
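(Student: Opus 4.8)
The plan is to follow the now-standard two-stage minimax argument from adversarial training (as in Goodfellow's GAN analysis and the domain-adversarial framework of \cite{ganin2016domain}), adapted to the weighted, multi-source setting. The first stage solves the inner maximization over the discriminator $D$ for a fixed feature extractor $F$; the second stage substitutes the optimal discriminator back and identifies the residual objective as a Jensen-Shannon divergence. Throughout, I would work in feature space by pushing forward through $F$, writing $z=F(\mathbf{x})$.

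First I would reinterpret the sample weights as distribution selectors. Since $w^s(y)=\mathbf{V}_{TMR}[y]$ estimates the probability that class $y$ lies in the common label set $\mathcal{C}$, in the ideal case $w^s(y)=1$ for $y\in\mathcal{C}$ and $w^s(y)=0$ otherwise; hence the weighted source expectation $\mathbb{E}_{(\mathbf{x},y)\sim p_i}\,w^s(y)[\cdot]$ collapses to an expectation over $p_{\mathcal{C}_i}$, the $i$-th source restricted to its common labels. Likewise $w^t(\mathbf{x},G\circ F)$ vanishes on unknown-class target samples and equals $1$ on common-class ones, so the weighted target expectation reduces to an expectation over $q_{\mathcal{C}}$. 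After this reduction, writing $P(z)=\frac{1}{M}\sum_{i=1}^M F(p_{\mathcal{C}_i})(z)$ for the pushed-forward center (mixture) of the source distributions and $Q(z)=F(q_{\mathcal{C}})(z)$ for the pushed-forward target distribution, the objective becomes
\[
-E_D=\int\big[P(z)\log D(z)+Q(z)\log(1-D(z))\big]\,dz.
\]

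Next I would solve the inner maximization pointwise. For each fixed $z$ the integrand has the form $a\log D+b\log(1-D)$ with $a=P(z)$, $b=Q(z)$, which is maximized at $D^*(z)=\frac{P(z)}{P(z)+Q(z)}$. Substituting $D^*$ back, the maximized value is $\int P\log\frac{P}{P+Q}+Q\log\frac{Q}{P+Q}\,dz$, and the routine identity $\mathrm{KL}(P\|\frac{P+Q}{2})=\int P\log\frac{P}{P+Q}\,dz+\log 2$ (together with its analogue for $Q$) rewrites this as $2\,\mathrm{JSD}(P\|Q)-2\log 2$. Since the additive constant and positive scaling do not affect the outer minimization over $F$, this yields
\[
\max_D\min_F(-E_D)\iff\min_F\mathrm{JSD}(P\|Q),
\]
which is exactly the claimed equivalence once $P$ and $Q$ are read back as the center of source distributions and the target distribution in $\mathcal{C}$.

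The step I expect to be the main obstacle is the first one: justifying that the soft weights $w^s$ and $w^t$ may be treated as hard indicators so that the weighted source and target expectations become the restricted distributions $p_{\mathcal{C}_i}$ and $q_{\mathcal{C}}$. This is where the margin-vector machinery does the real work, and the cleanest route is to state the idealized weighting ($w^s,w^t\in\{0,1\}$) explicitly as the regime in which the equivalence holds, in analogy with how Theorem \ref{theorem1} is proved under its disjoint-label-set simplification. The remaining pointwise optimization and the JSD rewriting are entirely standard and should go through verbatim.
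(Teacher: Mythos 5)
Your proposal is correct and follows essentially the same route as the paper's own proof: both assume the idealized (hard indicator) weights so that the weighted expectations reduce to $p_{\mathcal{C}_i}$ and $q_{\mathcal{C}}$, solve the inner maximization pointwise to get $D^*=\frac{P}{P+Q}$, and rewrite the optimal value as $2\,\mathrm{JSD}(P\|Q)-2\log 2$ before dropping constants in the outer minimization over $F$. If anything, your write-up is cleaner than the paper's, which contains notational slips (missing $\log$'s and a conflation of $q_{\mathcal{C}_i}$ with $q_{\mathcal{C}}$) in the substitution step that your version avoids.
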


\begin{proof}
Here, we consider that the ideal distribution of weights can be approximated by Eq. \ref{wsv} and Eq. \ref{wtv}, i.e., 
\begin{equation}
\begin{aligned}
w^{s}(y_j)\rightarrow\left\{\begin{array}{ll}
1, & y_j\in \mathcal{C} \\
0, & y_j\notin \mathcal{C},
\end{array}\right.
\label{wsi*}
\end{aligned}
\end{equation}
\begin{equation}
\begin{aligned}
w^{t}(\mathbf{x}_k,f)\rightarrow\left\{\begin{array}{ll}
1, & y_k\in \mathcal{C} \\
0, & y_k\notin \mathcal{C}.
\end{array}\right.
\label{wsi*}
\end{aligned}
\end{equation}

Then, $E_D$ defined in Eq. \ref{Ld} can be written as
\begin{equation}
\begin{aligned} 
E_{D}(F,D)\rightarrow-\frac{1}{M}\sum_{i=1}^M&\mathbb{E}_{(\mathbf{x}, y) \sim p_{\mathcal{C}_i}}\log D\left(F(\mathbf{x})\right)
\\ -\mathbb{E}_{\mathbf{x} \sim q_{\mathcal{C}}}&\log \left(1-D\left(F(\mathbf{x})\right)\right),
\label{Ld-}
\end{aligned}
\end{equation}

\begin{figure*}[h]
\centering
\includegraphics[height=1.6in,width=1\linewidth]{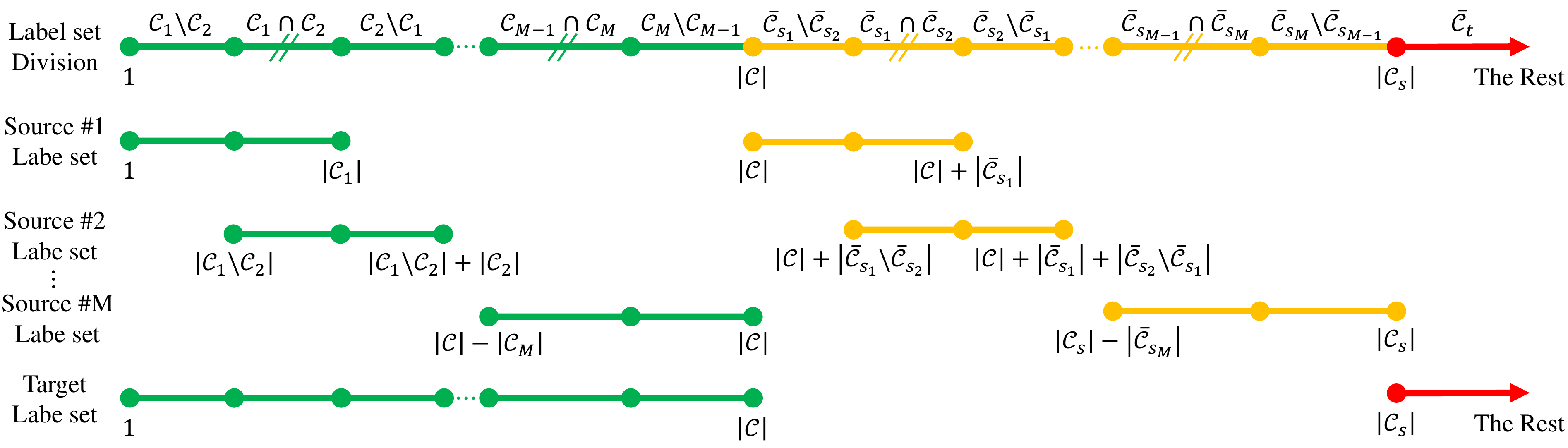}
\caption{The division rule of the common label set (green), the source private label set (orange) and the target private label set (red). Note that the lengths of the intersections are as equal as possible in the common label set (green double slash) and the source private label set (orange double slash) respectively.}
\label{labelsets}
\end{figure*}

Given $\mathbf{z}=F(x)$, we have the optimal solution when maximizing $-E_D$ for $D$:
\begin{equation}
\begin{aligned}
D^*(\mathbf{z})\rightarrow\frac{\frac{1}{M}\sum_{i=1}^Mp_{\mathcal{C}_i}(\mathbf{z})}{\frac{1}{M}\sum_{i=1}^Mp_{\mathcal{C}_i}(\mathbf{z})+q_{\mathcal{C}}(\mathbf{z})},
\label{D*}
\end{aligned}
\end{equation}where $p_{\mathcal{C}_i}(\mathbf{z})$ and $q_{\mathcal{C}_i}(\mathbf{z})$ are the probabilities that $\mathbf{x}=F^{-1}(\mathbf{z})$ can be sampled from $p_{\mathcal{C}_i}$ and $q_{\mathcal{C}_i}$ respectively. Optimal value can be calculated as follows:
\begin{equation}
\begin{aligned} 
-E_{D}(F,D^*)\rightarrow\frac{1}{M}\sum_{i=1}^M\mathbb{E}_{(\mathbf{x}, y) \sim p_{\mathcal{C}_i}}\frac{\frac{1}{M}\sum_{i=1}^Mp_{\mathcal{C}_i}(\mathbf{z})}{\frac{1}{M}\sum_{i=1}^Mp_{\mathcal{C}_i}(\mathbf{z})+q_{\mathcal{C}}(\mathbf{z})}&\\ 
+\mathbb{E}_{\mathbf{x} \sim q_{\mathcal{C}_i}}\frac{q_{\mathcal{C}_i}(\mathbf{z})}{\frac{1}{M}\sum_{i=1}^Mp_{\mathcal{C}_i}(\mathbf{z})+q_{\mathcal{C}}(\mathbf{z})}&\\
=\int_{f}\frac{1}{M}\sum_{i=1}^Mp_{\mathcal{C}_i}(\mathbf{z})\frac{\frac{1}{M}\sum_{i=1}^Mp_{\mathcal{C}_i}(\mathbf{z})}{\frac{1}{M}\sum_{i=1}^Mp_{\mathcal{C}_i}(\mathbf{z})+q_{\mathcal{C}}(\mathbf{z})}d\mathbf{z}&\\ 
+\int_{\mathbf{z}}q_{\mathcal{C}_i}(f)\frac{q_{\mathcal{C}_i}(\mathbf{z})}{\frac{1}{M}\sum_{i=1}^Mp_{\mathcal{C}_i}(\mathbf{z})+q_{\mathcal{C}}(\mathbf{z})}d\mathbf{z}&\\
=-2 \log 2+\int_{\mathbf{z}}\frac{1}{M}\sum_{i=1}^Mp_{\mathcal{C}_i}(f)\frac{\frac{1}{M}\sum_{i=1}^Mp_{\mathcal{C}_i}(\mathbf{z})}{\frac{1}{2M}\sum_{i=1}^Mp_{\mathcal{C}_i}(\mathbf{z})+\frac{1}{2}q_{\mathcal{C}}(\mathbf{z})}d\mathbf{z}&\\ 
+\int_{\mathbf{z}}q_{\mathcal{C}_i}(f)\frac{q_{\mathcal{C}_i}(\mathbf{z})}{\frac{1}{2M}\sum_{i=1}^Mp_{\mathcal{C}_i}(\mathbf{z})+\frac{1}{2}q_{\mathcal{C}}(\mathbf{z})}d\mathbf{z}&\\
=-2 \log 2+\mathrm{KL}(\frac{1}{M}\sum_{i=1}^Mp_{\mathcal{C}_i} \| \frac{\frac{1}{M}\sum_{i=1}^Mp_{\mathcal{C}_i}+q_{\mathcal{C}}}{2})&\\
+\mathrm{KL}(q_{\mathcal{C}_i} \| \frac{\frac{1}{M}\sum_{i=1}^Mp_{\mathcal{C}_i}+q_{\mathcal{C}}}{2})&\\
=-2\log 2+2\mathrm{JSD}(\frac{1}{M}\sum_{i=1}^Mp_{\mathcal{C}_i} \| q_{\mathcal{C}})&,
\label{Ld-}
\end{aligned}
\end{equation}where $\mathrm{KL}(\cdot\|\cdot)$ is the Kullback-Leibler Divergence and $\mathrm{JSD}(\cdot\|\cdot)$ is the Jensen-Shannon Divergence. Then, with minimizing $-E_D$ for $F$ with a much smaller learning rate, we get an equivalent optimization:
\begin{equation}
\min _{F} \mathrm{JSD}(\frac{1}{M}\sum_{i=1}^MF(p_{\mathcal{C}_i}(\mathbf{x}_j)) \| F(q_{\mathcal{C}}(\mathbf{x}_k))),
\label{eqop}
\end{equation}where $\mathbf{x}_j$ are drawn from $p_{\mathcal{C}_i}$ and $\mathbf{x}_k$ are drawn from $q_{\mathcal{C}}$. Hence, the center of the distributions of source domains and the distribution of target domain are aligned in their common label set.
\end{proof}

\section{Experiments}
\subsection{Experimental Setup}
\subsubsection{Label Sets}
The division rule of all the label sets introduced in Section \ref{Problem Setting} is shown in Fig. \ref{labelsets}. Each domain contains two types of label sets: common and private. Therefore, we use a matrix to describe a specific UMDA setting, called \textbf{UMDA-Matrix}, which is defined as
\begin{equation}
\left[{\begin{array}{c:c}
\begin{matrix}
|\mathcal{C}_1| & |\mathcal{C}_2| & \cdots & |\mathcal{C}_M| \\
|\overline{\mathcal{C}}_{s_1}| & |\overline{\mathcal{C}}_{s_2}| & \cdots & |\overline{\mathcal{C}}_{s_M}|
\end{matrix}&
\begin{matrix}
|\mathcal{C}| \\
|\overline{\mathcal{C}}_{t}|
\end{matrix}
\end{array}}\right].
\label{vUMDA}
\end{equation} The first line of Eq. \ref{vUMDA} is the size of the common label set of all the domains, and the second line the private label set. The first $M$ columns of Eq. \ref{vUMDA} are the label set of the source domains, and the last column the target domain. In this way, UMDA settings can be determined by the division rule and the UMDA-Matrix.

\subsubsection{Datasets}
The UMDA settings on each dataset are shown in Table \ref{UMDAsettings}, where multiple UMDA-Matrices mean that multiple UMDA settings are considered in one dataset. More settings are explored in Section \ref{UMAN Settings}. The following is additional information about the datasets:

\textbf{Office-31} \cite{saenko2010adapting} dataset contains 31 categories in 3 different domains: \textbf{A}, \textbf{D}, \textbf{W}. There are at most two source domains and one target domain, i.e. $M=2$. We use the 10 classes shared by Office-31 and Caltech-256 \cite{gong2012geodesic} as the total common label set $\mathcal{C}$, and the rest of the classes are sorted in alphabetical order.

\textbf{Office-Home} \cite{venkateswara2017deep} is a larger dataset with 65 classes in 4 different domains: Artistic (\textbf{Ar}), Clip-Art (\textbf{Cl}), Product (\textbf{Pr}) and Real-World ($\mathbf{Rw}$) images. In this dataset, $M$ can be 2 and 3. All the classes are sorted in alphabetical order.

\textbf{VisDA2017+ImageCLEF-DA}. VisDA2017 \cite{peng2018visda} dataset focuses on a special domain adaptation setting (simulation to real). The source domain consists of simulated images ($\textbf{S}$) generated by game engines and target domain consists of real-world images ($\textbf{R}$). ImageCLEF-DA\footnote{https://www.imageclef.org/2014/adaptation} is a benchmark dataset for ImageCLEF 2014 domain adaptation challenge, which is organized by selecting the common categories shared by the following three public datasets. Here, each dataset is considered as a domain: Caltech-256 (\textbf{C}), ImageNet ILSVRC 2012 (\textbf{I}), and Pascal VOC 2012 (\textbf{P}). In this dataset, $M$ can be 2, 3 and 4. Classes are numbered in this way: No. 1-7 classes are the shared classes of five datasets, in alphabet order, No. 8-12 classes are the remaining classes of \textbf{S} and \textbf{R}, and No. 13-17 classes are the remaining classes of \textbf{C}, \textbf{I} and \textbf{P}.

\newcommand{\tabincell}[2]{\begin{tabular}{@{}#1@{}}#2\end{tabular}}
\renewcommand\arraystretch{1.1}
\begin{table}
\centering \small
\caption{The UMDA settings on different datasets.}
\setlength{\tabcolsep}{0.6mm}{
\begin{tabular}{|c|c|c|c|c|c|}
\hline
Datasets & classes & Domains & Images & UMDA-Matrices \\
\hline
\multirow{3}{*}{Office-31}&\multirow{3}{*}{31}& $\mathrm{A}$ & 2817 & \multirow{3}{*}{$\left[{\begin{array}{c:c}
\begin{matrix}
7 & 7\\
5 & 5
\end{matrix}&
\begin{matrix}
10 \\
11
\end{matrix}
\end{array}}\right]$} \\
\cline{3-4}&& $\mathrm{D}$ & 498 &\\
\cline{3-4}&& $\mathrm{W}$ & 795 &\\
\hline
\multirow{4}{*}{Office-Home}&\multirow{4}{*}{65}& $\mathrm{Ar}$ & 2427 & \multirow{2}{*}{$\left[{\begin{array}{c:c}
\begin{matrix}
7 & 7\\
5 & 5
\end{matrix}&
\begin{matrix}
10 \\
50
\end{matrix}
\end{array}}\right]$} \\
\cline{3-4}&& $\mathrm{Cl}$ & 4365 &\\
\cline{3-4}&& $\mathrm{Pr}$ & 4439 &\multirow{2}{*}{$\left[{\begin{array}{c:c}
\begin{matrix}
4 & 4 & 4\\
2 & 2 & 2
\end{matrix}&
\begin{matrix}
10 \\
50
\end{matrix}
\end{array}}\right]$}\\
\cline{3-4}&& $\mathrm{Rw}$ & 4357 &\\
\hline
\multirow{6}{*}{\tabincell{c}{VisDA2017\\+\\ImageCLEF-DA}}&\multirow{6}{*}{17}& \multirow{2}{*}{$\mathrm{S}$} & \multirow{2}{*}{152397} & \multirow{2}{*}{$\left[{\begin{array}{c:c}
\begin{matrix}
4 & 4\\
1 & 1
\end{matrix}&
\begin{matrix}
7 \\
3
\end{matrix}
\end{array}}\right]$} \\
&&&&\\
\cline{3-4}&& $\mathrm{R}$ & 55388 &\multirow{2}{*}{$\left[{\begin{array}{c:c}
\begin{matrix}
3 & 3 & 3\\
1 & 1 & 1
\end{matrix}&
\begin{matrix}
7 \\
2
\end{matrix}
\end{array}}\right]$}\\
\cline{3-4}&& $\mathrm{C}$ & 600 &\\
\cline{3-4}&& $\mathrm{I}$ & 600 &\multirow{2}{*}{$\left[{\begin{array}{c:c}
\begin{matrix}
2 & 2 & 2 & 1\\
1 & 1 & 1 & 1
\end{matrix}&
\begin{matrix}
7 \\
1
\end{matrix}
\end{array}}\right]$}\\
\cline{3-4}&& $\mathrm{P}$ & 600 &\\
\hline
\end{tabular}}
\label{UMDAsettings}
\end{table}

Some samples in each dataset are shown in Fig. \ref{datasets}.
\begin{figure}
\centering
\includegraphics[height=2.2in,width=1\linewidth]{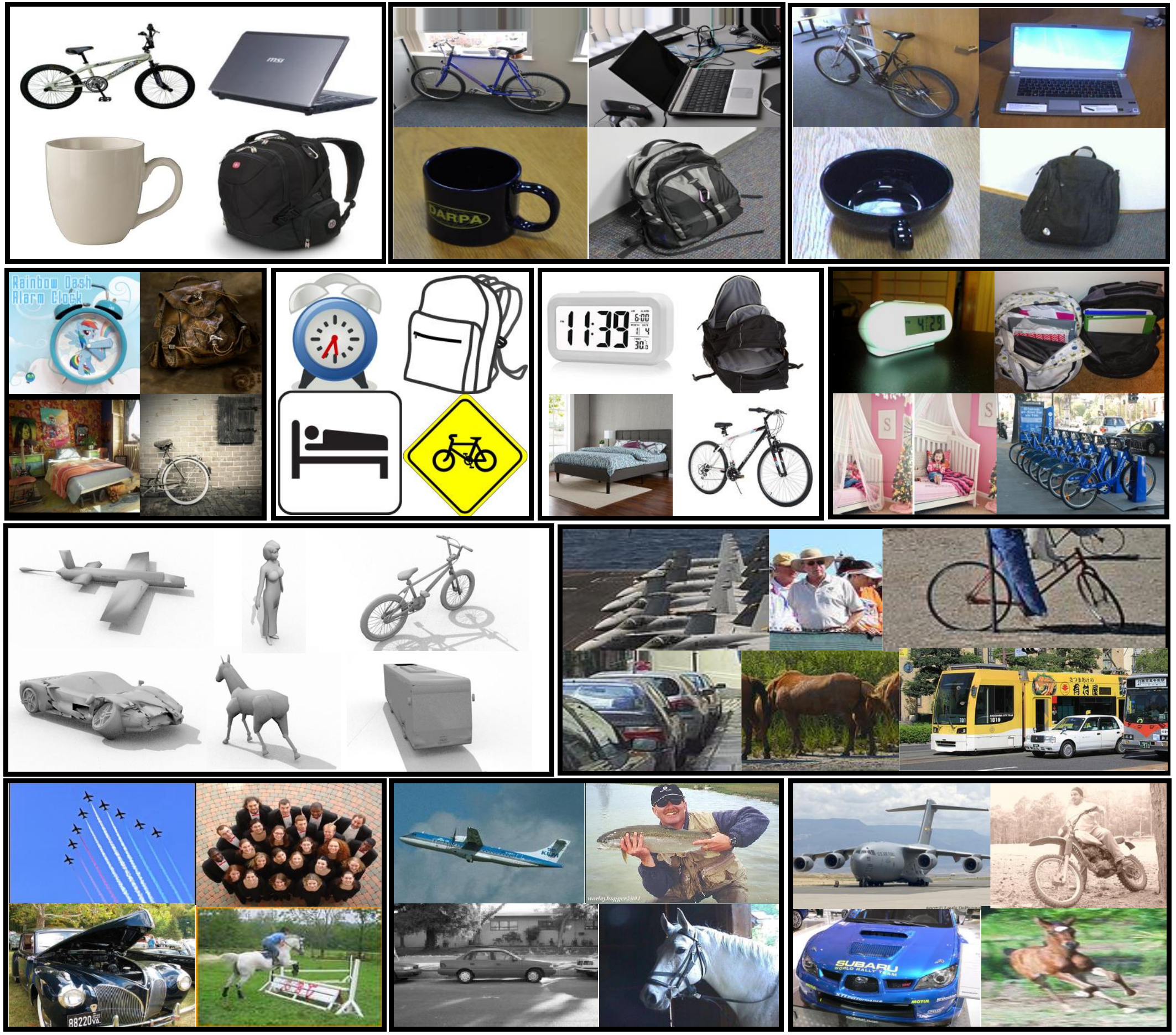}
\caption{Some samples in following datasets (from top to bottom and left to right). (1) \textbf{Office-31}: Amazon (\textbf{A}), Dslr (\textbf{D}) and Webcam (\textbf{W}). \textbf{(2) Office-Home}: Artistic (\textbf{Ar}), Clip-Art (\textbf{Cl}), Product (\textbf{Pr}) and Real-World images (\textbf{W}). \textbf{(3) VisDA2017}: Synthetic (\textbf{Syn}) and Real images (\textbf{Real}). \textbf{(4) ImageCLEF-DA}: Caltech-256 (\textbf{C}), ImageNet ILSVRC 2012 (\textbf{I}), and Pascal VOC 2012 (\textbf{P}).}
\label{datasets}
\end{figure}

\subsubsection{Evaluation Details}
\textbf{Compared Methods}. The proposed UMAN is compared with \textbf{(1)} Single-source-only without domain adaptation (DA): \textbf{ResNet} \cite{he2016deep}, \textbf{(2)} closed set single-source DA: Domain-Adversarial Neural Networks (\textbf{DANN}) \cite{ganin2016domain}, Residual Transfer Networks (\textbf{RTN}) \cite{long2016unsupervised}, \textbf{(3)} partial single-source DA: Importance Weighted Adversarial Nets (\textbf{IWAN}) \cite{zhang2018importance}, Partial Adversarial Domain Adaptation (\textbf{PADA}) \cite{cao2018partialeccv}, \textbf{(4)} open set single-source DA: Open Set Back-Propagation (\textbf{OSBP}) \cite{saito2018open}, \textbf{(5)} universal single-source DA: Universal Adaptation Network (\textbf{UAN}) \cite{you2019universal}, \textbf{(6)} closed set multi-source DA: Multisource Domain Adversarial Networks (\textbf{MDAN}) \cite{zhao2018adversarial}, Multi-source Distilling Domain Adaptation (\textbf{MDDA}) \cite{ZhaoWZGLS0HCK20}, \textbf{(7)} closed set multi-source DA with category shift: Deep CockTail Network (\textbf{DCTN}) \cite{xu2018deep}. These methods are state of the art in their respective settings. In the following experiments, we evaluation all these methods in the UMDA setting.

\textbf{Evaluation Standards}. We evaluate these domain adaptation methods under four standards. \textbf{(1) Source-only}: train on the source domains and directly test on the target domain. \textbf{(2) Single-best UDA}: train on each source domain and adapt to target domain, in the source shared common label sets $\mathcal{C}_i\cap\mathcal{C}_j$, take the highest accuracy adapted from the $i$-th and $j$-th source domain, in the source specific common label sets $\mathcal{C}_j\setminus(\mathcal{C}_i\cap\mathcal{C}_j)$, take the accuracy adapted from the $j$-th source domain. \textbf{(3) Source-combined UDA}: combine all the source domains as a new source domain, train on the new source domain and adapt to target domain. \textbf{(3) UMDA}: train on multiple source domains and adapt to target domain, here domain discrepancy and label discrepancy are present between every two domains.

\textbf{Evaluation Protocols}. All the samples with their labels in $\overline{\mathcal{C}}_t$ are viewed as one unified "unknown" class and the final accuracy is averaged by per-class accuracy for $|\mathcal{C}|+1$ classes. At the testing stage, when the maximum probability of the prediction $\max_{c\in \mathcal{C}_s}G^c(F(\mathbf{x}))$ is less than $w_0$, the input image is classified as "unknown", otherwise the input image is classified as $y={\arg\max}_{c\in \mathcal{C}_s}G^c(F(\mathbf{x}))$.

\textbf{Implementation Details}. Implementation is in PyTorch and ResNet-50 \cite{he2016deep} is used as fine-tuned model, which is pre-trained on ImageNet.

\textbf{Hyperparameters}. We adjust the threshold $w_0$ on $\bm{\mathrm{DW}\rightarrow\mathrm{A}}$. As shown in Table \ref{w_0}, we observe that UMAN is not sensitive to $w_0$ and $w_0=0.5$ is always employed in all experiments.

\begin{table}[h]
\centering 
\caption{The effect of $w_0$ on the test accuracy (\%) for $\mathrm{DW}\rightarrow\mathrm{A}$.} 
\begin{tabular}{ccccccc} 
\toprule 
\multicolumn {7}{c}{$w_0$}\\
\midrule 
0.2 & 0.3 & 0.4 & 0.5 & 0.6 & 0.7 & 0.8 \\ 
\midrule 
90.66 & 91.65 & 91.58 & 90.45 & 90.19 & 89.75 & 88.74 \\
\bottomrule 
\end{tabular} 
\label{w_0} 
\end{table}

\subsection{Classification Results}
\renewcommand\arraystretch{0.8}
\begin{table}
\centering \small
\caption{Comparison with the state-of-the-art DA methods of universal multi-source domain adaptation tasks on \textbf{Office-31} ($M=2$, $\xi_1=\xi_2=0.30$ and $\xi_{12}=0.40$) dataset (Backbone: ResNet-50) measured by classification accuracy (\%).}
\setlength{\tabcolsep}{2.1mm}{
\begin{tabular}{c|c|c|c|c|c}
\hline
\multirow{3}{*}{Standards} & Methods & \multicolumn {3}{c|}{Office-31}& \multirow{3}{*}{Avg}\\
\cline{2-5}&Source& $\mathrm{DW}$ & $\mathrm{AW}$ & $\mathrm{AD}$ \\
\cline{2-5}&Target& $\mathrm{A}$ & $\mathrm{D}$ & $\mathrm{W}$ \\
\hline
\multirow{2}{*}{Source-only} & Combined & 83.28 & 88.64 & 83.78 & 85.23 \\
& Single-best & 80.64 & 87.77 & 85.50 & 84.64 \\
\hline
\multirow{7}{*}{\tabincell{c}{Single-best\\UDA}} 
& DANN \cite{ganin2016domain} & 80.92 & 86.45 & 80.85 & 82.74 \\
& RTN \cite{long2016unsupervised} & 80.67 & 87.04 & 87.17 & 84.96 \\
& IWAN \cite{zhang2018importance} & 85.64 & 88.28 & 88.64 & 87.52 \\
& PADA \cite{cao2018partialeccv} & 74.42 & 88.14 & 83.54 & 82.03 \\
& OSBP \cite{saito2018open} & 56.54 & 81.81 & 71.34 & 69.90 \\
& UAN \cite{you2019universal} & 85.35 & 94.54 & 92.03 & 90.65 \\
\hline
\multirow{7}{*}{\tabincell{c}{Source-\\combined\\UDA}} 
& DANN \cite{ganin2016domain} & 83.43 & 81.36 & 84.22 & 83.00 \\
& RTN \cite{long2016unsupervised} & 86.70 & 88.64 & 83.22 & 86.19 \\
& IWAN \cite{zhang2018importance} & 85.64 & 87.18 & 82.31 & 85.04 \\
& PADA \cite{cao2018partialeccv} & 88.15 & 89.19 & 86.09 & 87.81 \\
& OSBP \cite{saito2018open} & 57.76 & 81.54 & 78.48 & 72.59 \\
& UAN \cite{you2019universal} & 82.97 & 88.64 & 81.23 & 84.28 \\
\hline
\multirow{4}{*}{\tabincell{c}{UMDA}} 
& DCTN \cite{xu2018deep} & 88.84 & 89.18 & 83.73 & 87.25 \\
& MDAN \cite{zhao2018adversarial} & 85.82 & 92.82 & 88.43 & 89.02 \\
& MDDA \cite{ZhaoWZGLS0HCK20} & 85.40 & 88.50 & 82.40 & 85.43 \\
& \bf{UMAN (ours)} & \bf{90.45} & \bf{95.12} & \bf{92.61} & \bf{92.73} \\
\hline
\end{tabular}}
\label{Office-31}
\end{table}

\begin{table*}[h]
\centering \small
\caption{Comparison with the state-of-the-art DA methods of universal multi-source domain adaptation tasks on \textbf{Office-Home} ($M=2$, $\xi_1=\xi_2=0.12$ and $\xi_{12}=0.40$) dataset (Backbone: ResNet-50) measured by classification accuracy (\%).}
\setlength{\tabcolsep}{1.5mm}{
\begin{tabular}{c|c|c|c|c|c|c|c|c|c|c|c|c|c|c}
\hline
\multirow {3}{*}{Standards} & Methods & \multicolumn {12}{c|}{Office-Home $(M=2)$} & \multirow {3}{*}{Avg} \\
\cline{2-14}&Source& $\mathrm{ClPr}$ & $\mathrm{ClRw}$ & $\mathrm{PrRw}$ & $\mathrm{ArPr}$ & $\mathrm{ArRw}$ & $\mathrm{PrRw}$ & $\mathrm{ArCl}$ & $\mathrm{ArRw}$ & $\mathrm{ClRw}$ & $\mathrm{ArCl}$ & $\mathrm{ArPr}$ & $\mathrm{ClPr}$ \\
\cline{2-14}&Target& $\mathrm{Ar}$ & $\mathrm{Ar}$ & $\mathrm{Ar}$ & $\mathrm{Cl}$ & $\mathrm{Cl}$ & $\mathrm{Cl}$ & $\mathrm{Pr}$ & $\mathrm{Pr}$ & $\mathrm{Pr}$ & $\mathrm{Rw}$ & $\mathrm{Rw}$ & $\mathrm{Rw}$\\
\hline
\multirow{2}{*}{Source-only} & Combined & 70.06 & 73.04 & 77.82 & 58.09 & 60.20 & 59.23 & 77.23 & 79.82 & 75.50 & 85.53 & 86.79 & 80.48 & 73.65 \\
& Single-best & 72.56 & 76.03 & 77.19 & 58.45 & 59.33 & 58.34 & 74.91 & 77.99 & 76.35 & 85.73 & 87.06 & 84.75 & 74.06 \\
\hline
\multirow{7}{*}{\tabincell{c}{Single-best\\UDA}} 
& DANN \cite{ganin2016domain} & 69.55 & 76.19 & 76.56 & 56.64 & 56.93 & 57.13 & 79.22 & 80.68 & 76.80 & 85.94 & 86.55 & 85.19 & 73.95 \\
& RTN \cite{long2016unsupervised} & 67.04 & 74.98 & 75.80 & 48.89 & 54.02 & 52.45 & 76.48 & 78.49 & 77.17 & 86.35 & 86.48 & 85.37 & 71.96 \\
& IWAN \cite{zhang2018importance} & 73.59 & 75.41 & 76.70 & 55.90 & 57.52 & 58.95 & 78.28 & 80.65 & 76.53 & 86.14 & 86.08 & 85.22 & 74.25 \\
& PADA \cite{cao2018partialeccv} & 58.32 & 71.61 & 67.68 & 38.44 & 43.02 & 41.89 & 68.78 & 75.48 & 74.89 & 77.11 & 78.60 & 78.96 & 64.57 \\
& OSBP \cite{saito2018open} & 60.94 & 67.18 & 67.91 & 46.78 & 52.79 & 51.82 & 61.38 & 70.90 & 71.10 & 76.05 & 79.55 & 77.82 & 65.35 \\
& UAN \cite{you2019universal} & 77.82 & 81.42 & 81.83 & \bf{61.68} & \bf{63.12} & 61.80 & 81.59 & 81.81 & 79.21 & 87.10 & 87.54 & 86.37 & 77.61 \\
\hline
\multirow{7}{*}{\tabincell{c}{Source-\\combined\\UDA}} 
& DANN \cite{ganin2016domain} & 71.26 & 72.75 & 76.77 & 56.62 & 59.57 & 57.67 & 80.39 & 83.04 & 81.87 & 86.93 & 88.39 & 85.89 & 75.10 \\
& RTN \cite{long2016unsupervised} & 71.63 & 76.48 & 80.03 & 55.79 & 55.36 & 54.51 & 80.53 & 79.87 & 80.94 & 86.92 & 87.47 & 80.53 & 74.17 \\
& IWAN \cite{zhang2018importance} & 70.02 & 71.45 & 74.78 & 52.96 & 55.33 & 51.27 & 71.73 & 82.12 & 73.91 & 85.62 & 87.77 & 84.37 & 71.78 \\
& PADA \cite{cao2018partialeccv} & 71.55 & 72.25 & 76.78 & 55.06 & 59.39 & 54.09 & 78.28 & 79.90 & 73.81 & 85.64 & 88.51 & 83.00 & 73.19 \\
& OSBP \cite{saito2018open} & 58.37 & 66.09 & 65.92 & 47.39 & 53.90 & 52.38 & 64.26 & 68.12 & 62.80 & 80.13 & 80.76 & 73.48 & 64.47 \\
& UAN \cite{you2019universal} & 75.58 & 77.83 & 80.66 & 59.13 & 62.53 & 62.01 & 80.94 & 83.63 & 82.60 & 87.23 & 87.89 & 85.85 & 77.16 \\
\hline
\multirow{4}{*}{\tabincell{c}{UMDA}} 
& DCTN \cite{xu2018deep} & 70.60 & 70.42 & 67.73 & 55.53 & 56.33 & 48.17 & 75.26 & 74.07 & 68.88 & 79.71 & 84.85 & 82.32 & 69.49 \\
& MDAN \cite{zhao2018adversarial} & 72.29 & 71.98 & 76.09 & 58.19 & 62.59 & 58.35 & 78.54 & 80.84 & 75.58 & 87.18 & 88.08 & 85.09 & 74.57 \\
& MDDA \cite{ZhaoWZGLS0HCK20} & 68.30 & 69.89 & 73.48 & 51.79 & 58.30 & 53.97 & 76.68 & 79.01 & 75.19 & 84.69 & 87.03 & 81.26 & 71.63 \\
& \bf{UMAN (ours)} & \bf{82.93} & \bf{83.03} & \bf{82.23} & 60.58 & 61.95 & \bf{62.06} & \bf{84.27} & \bf{84.71} & \bf{84.73} & \bf{88.67} & \bf{90.29} & \bf{87.80} & \bf{79.44} \\
\hline
\end{tabular}}
\label{Office-Home}
\end{table*}

\begin{table}[h]
\centering \small
\caption{Comparison with the state-of-the-art DA methods of universal multi-source domain adaptation tasks on \textbf{Office-Home} ($M=3$, $\xi_1=\xi_2=\xi_3=0.08$, $\xi_{12}=0.00$ and $\xi_{23}=0.33$) dataset (Backbone: ResNet-50) measured by classification accuracy (\%).}
\setlength{\tabcolsep}{0.3mm}{
\begin{tabular}{c|c|c|c|c|c|c}
\hline
\multirow {3}{*}{Standards} & Methods & \multicolumn {4}{c|}{Office-Home $(M=3)$}& \multirow{3}{*}{Avg}\\
\cline{2-6}&Source& $\mathrm{ClPrRw}$ & $\mathrm{ArPrRw}$ & $\mathrm{ArClRw}$ & $\mathrm{ArClPr}$ \\
\cline{2-6}&Target& $\mathrm{Ar}$ & $\mathrm{Cl}$ & $\mathrm{Pr}$ & $\mathrm{Rw}$ \\
\hline
\multirow{2}{*}{Source-only} & Combined & 65.36 & 57.87 & 79.99 & 83.24 & 71.62 \\
& Single-best & 74.55 & 58.70 & 76.29 & 85.75 & 73.82 \\
\hline
\multirow{7}{*}{\tabincell{c}{Single-best\\UDA}} 
& DANN \cite{ganin2016domain} & 73.34 & 56.81 & 78.67 & 85.82 & 73.66 \\
& RTN \cite{long2016unsupervised} & 71.57 & 51.45 & 77.32 & 85.97 & 71.58 \\
& IWAN \cite{zhang2018importance} & 74.63 & 56.82 & 78.32 & 85.71 & 73.87 \\
& PADA \cite{cao2018partialeccv} & 65.08 & 40.79 & 72.41 & 77.96 & 64.06 \\
& OSBP \cite{saito2018open} & 64.51 & 49.98 & 66.82 & 77.30 & 67.06 \\
& UAN \cite{you2019universal} & \bf{79.88} & 62.19 & 80.64 & 86.93 & 77.41 \\
\hline
\multirow{7}{*}{\tabincell{c}{Source-\\combined\\UDA}} 
& DANN \cite{ganin2016domain} & 68.97 & 53.37 & 79.70 & 82.09 & 71.03 \\
& RTN \cite{long2016unsupervised} & 68.72 & 59.97 & 77.04 & 86.00 & 72.93 \\
& IWAN \cite{zhang2018importance} & 62.01 & 48.47 & 77.95 & 82.33 & 67.69 \\
& PADA \cite{cao2018partialeccv} & 62.17 & 50.87 & 73.66 & 81.67 & 67.10 \\
& OSBP \cite{saito2018open} & 44.17 & 45.98 & 63.37 & 68.56 & 55.52 \\
& UAN \cite{you2019universal} & 69.27 & 60.32 & 79.78 & 82.82 & 73.05 \\
\hline
\multirow{4}{*}{\tabincell{c}{UMDA}} 
& DCTN \cite{xu2018deep} & 64.77 & 42.09 & 65.25 & 70.11 & 60.56 \\
& MDAN \cite{zhao2018adversarial} & 67.56 & 55.36 & 79.20 & 86.02 & 72.04 \\
& MDDA \cite{ZhaoWZGLS0HCK20} & 44.66 & 34.54 & 54.93 & 53.24 & 46.84 \\
& \bf{UMAN} & 79.00 & \bf{64.68} & \bf{81.12} & \bf{87.08} & \bf{77.97} \\
\hline
\end{tabular}}
\label{Office-Home3}
\end{table}

\begin{table*}
\centering \small
\caption{Comparison with the state-of-the-art DA methods of universal multi-source domain adaptation tasks on \textbf{VisDA2017+ImageCLEF-DA} ($M=2$, $\xi_1=\xi_2=0.36$ and $\xi_{12}=0.14$) dataset (Backbone: ResNet-50) measured by classification accuracy (\%). (\textbf{Com.}: source-combined. \textbf{Sin.}: single-best.)}
\setlength{\tabcolsep}{0.6mm}{
\begin{tabular}{|c|c|c|c|c|c|c|c|c|c|c|c|c|c|c|c|c|c|c|c|}
\hline
\multirow {2}{*}{$\mathcal{D}_{s_i}$} & \multirow {2}{*}{$\mathcal{D}_{t}$} & \multicolumn {2}{|c|}{Source-only} & \multicolumn {6}{|c|}{Single-best UDA} & \multicolumn {6}{|c|}{Source-combined UDA} & \multicolumn {4}{|c|}{UMDA} \\
\cline{3-20}&& Com. & Sin. & DANN & RTN & IWAN & PADA & OSBP & UAN & DANN & RTN & IWAN & PADA & OSBP & UAN & DCTN & MDAN & MDDA & \bf{UMAN} \\
\hline
$\mathrm{RC}$ & $\mathrm{S}$ & 71.28 & 63.41 & 64.55 & 65.35 & 67.87 & 66.43 & 58.61 & 64.85 & 76.12 & 64.56 & 61.15 & 74.87 & 62.42 & 75.93 & 62.35 & 57.73 & 56.41 & \bf{76.07} \\
$\mathrm{RI}$ & $\mathrm{S}$ & 71.28 & 68.14 & 69.55 & 72.48 & 71.64 & 70.89 & 67.31 & 69.05 & 77.43 & 77.50 & 61.11 & 77.09 & 68.98 & 76.44 & 74.18 & 62.50 & 59.75 & \bf{77.99} \\
$\mathrm{RP}$ & $\mathrm{S}$ & 67.03 & 69.40 & 71.02 & 71.47 & 70.31 & 70.74 & 66.72 & 70.91 & 76.52 & 62.37 & 73.63 & 72.51 & 64.43 & 76.43 & 65.25 & 60.93 & 58.33 & \bf{76.65} \\
$\mathrm{CI}$ & $\mathrm{S}$ & 68.24 & 60.08 & 62.74 & 65.90 & 62.60 & 64.75 & 54.62 & 62.40 & 64.00 & 60.25 & 60.42 & 69.01 & 52.42 & 69.02 & 67.85 & 65.30 & 58.25 & \bf{69.19} \\
$\mathrm{CP}$ & $\mathrm{S}$ & 61.25 & 61.76 & 64.67 & 64.73 & 60.83 & 64.55 & 53.83 & 64.87 & 70.24 & 72.39 & 55.51 & 69.91 & 47.29 & 70.40 & 56.51 & 58.88 & 53.94 & \bf{76.98} \\
$\mathrm{IP}$ & $\mathrm{S}$ & 65.15 & 66.49 & 69.47 & 72.02 & 65.05 & 69.07 & 62.73 & 69.07 & 77.41 & \bf{78.01} & 60.43 & 72.83 & 60.19 & 76.33 & 63.11 & 62.64 & 55.98 & 69.56 \\
\hline
$\mathrm{SC}$ & $\mathrm{R}$ & 61.10 & 64.78 & 67.47 & 67.09 & 57.79 & 66.89 & 49.15 & 67.89 & 69.75 & 64.63 & 58.41 & 60.78 & 40.20 & 65.38 & 67.84 & 63.03 & 37.81 & \bf{74.06} \\
$\mathrm{SI}$ & $\mathrm{R}$ & 61.95 & 67.20 & 67.08 & 67.53 & 65.52 & 66.68 & 52.73 & 68.05 & 70.95 & 63.63 & 55.27 & 63.46 & 41.80 & 71.81 & 67.25 & 59.98 & 32.23 & \bf{74.97} \\
$\mathrm{SP}$ & $\mathrm{R}$ & 57.41 & 67.32 & 67.51 & 68.97 & 61.20 & 67.04 & 55.40 & 68.35 & 65.48 & 65.32 & 54.09 & 54.96 & 35.12 & 61.80 & 66.16 & 56.26 & 36.46 & \bf{73.46} \\
$\mathrm{CI}$ & $\mathrm{R}$ & 69.32 & 67.30 & 67.05 & 66.74 & 62.17 & 66.59 & 55.34 & 68.08 & 74.13 & 74.67 & 67.35 & 65.81 & 48.05 & 72.98 & 62.32 & 66.06 & 53.27 & \bf{76.66} \\
$\mathrm{CP}$ & $\mathrm{R}$ & 61.39 & 67.42 & 67.49 & 68.27 & 57.85 & 66.98 & 58.02 & 68.72 & 65.56 & 62.86 & 61.20 & 57.47 & 39.49 & 66.32 & 63.80 & 60.79 & 50.57 & \bf{72.87} \\
$\mathrm{IP}$ & $\mathrm{R}$ & 67.76 & 69.24 & 67.10 & 68.71 & 65.56 & 66.77 & 60.70 & 68.50 & 72.66 & 73.98 & 69.99 & 64.00 & 56.57 & 69.48 & 58.84 & 64.53 & 55.53 & \bf{74.16} \\
\hline
$\mathrm{SR}$ & $\mathrm{C}$ & 70.77 & 75.86 & 78.80 & 70.44 & 70.61 & 81.56 & 80.29 & 79.71 & 64.41 & 73.71 & 70.04 & 74.94 & 54.57 & 72.57 & 78.61 & 72.74 & 68.29 & \bf{81.00} \\
$\mathrm{SI}$ & $\mathrm{C}$ & 69.55 & 74.43 & 78.51 & 70.33 & 70.33 & 80.24 & 80.23 & 80.91 & 72.00 & 70.53 & 70.29 & 74.94 & 57.14 & 78.00 & 80.57 & 72.00 & 64.00 & \bf{85.25} \\
$\mathrm{SP}$ & $\mathrm{C}$ & 63.68 & 72.37 & 78.69 & 70.84 & 70.79 & 80.19 & 79.31 & \bf{80.29} & 71.51 & 70.77 & 70.04 & 66.37 & 53.43 & 66.25 & 78.37 & 65.14 & 66.29 & 75.75 \\
$\mathrm{RI}$ & $\mathrm{C}$ & 76.41 & 80.83 & 79.79 & 79.31 & 79.49 & 82.41 & 83.16 & 82.80 & 79.11 & 82.29 & 81.80 & 80.32 & 82.00 & 80.50 & 81.31 & 77.14 & 80.29 & \bf{87.83} \\
$\mathrm{RP}$ & $\mathrm{C}$ & 71.75 & 79.29 & 79.91 & 79.80 & 79.87 & 82.37 & 82.47 & 82.17 & 74.20 & 80.82 & 80.82 & 76.17 & 67.14 & 75.75 & 81.31 & 74.69 & 73.96 &\bf{83.67} \\
$\mathrm{IP}$ & $\mathrm{C}$ & 78.12 & 77.86 & 79.67 & 79.71 & 79.66 & 81.06 & 82.41 & 83.23 & 81.06 & 82.77 & 81.80 & 80.57 & 81.14 & 82.25 & 81.06 & 76.17 & 84.86 & \bf{86.33} \\
\hline
$\mathrm{SR}$ & $\mathrm{I}$ & 67.59 & 79.09 & 79.14 & 76.33 & 77.47 & 79.46 & 76.44 & 79.17 & 73.47 & 77.14 & 77.43 & 72.49 & 52.29 & 76.50 & 73.47 & 70.04 & 60.86 & \bf{81.08} \\
$\mathrm{SC}$ & $\mathrm{I}$ & 72.98 & 76.11 & 76.97 & 75.31 & 74.94 & 77.33 & 68.76 & 78.83 & 75.18 & 76.57 & 74.57 & 75.43 & 54.00 & 77.50 & 75.43 & 73.71 & 67.14 & \bf{79.83} \\
$\mathrm{SP}$ & $\mathrm{I}$ & 61.71 & 78.80 & 77.44 & 75.87 & 76.79 & 78.20 & 76.96 & \bf{80.60} & 69.80 & 63.43 & 61.96 & 66.61 & 47.71 & 73.00 & 74.94 & 63.68 & 62.86 & 71.42 \\
$\mathrm{RC}$ & $\mathrm{I}$ & 75.92 & 77.20 & 78.11 & 75.94 & 76.31 & 78.99 & 74.30 & 79.94 & 76.41 & 78.61 & 77.14 & 76.65 & 72.00 & 78.75 & 74.45 & 72.25 & 77.43 & \bf{85.67} \\
$\mathrm{RP}$ & $\mathrm{I}$ & 72.98 & 79.69 & 78.59 & 76.46 & 77.99 & 79.76 & 81.11 & 81.63 & 69.06 & 76.41 & 75.92 & 71.02 & 58.86 & 72.50 & 74.94 & 73.23 & 78.00 & \bf{84.08} \\
$\mathrm{CP}$ & $\mathrm{I}$ & 63.92 & 76.91 & 76.23 & 75.49 & 75.63 & 77.73 & 74.81 & 81.37 & 69.06 & 72.25 & 72.00 & 67.35 & 54.00 & 69.25 & 73.71 & 65.63 & 74.29 & \bf{81.42} \\
\hline
$\mathrm{SR}$ & $\mathrm{P}$ & 61.23 & 70.59 & 70.34 & 69.16 & 68.20 & 70.70 & 67.53 & 70.76 & 60.25 & 65.71 & 67.43 & 61.71 & 41.43 & 64.00 & 65.39 & 61.23 & 57.71 & \bf{71.33} \\
$\mathrm{SC}$ & $\mathrm{P}$ & 61.96 & 68.70 & 68.33 & 65.14 & 63.66 & 68.13 & 58.31 & 68.81 & 58.53 & 66.00 & 64.57 & 57.55 & 35.71 & 65.25 & 64.41 & 60.74 & 45.71 & \bf{68.83} \\
$\mathrm{SI}$ & $\mathrm{P}$ & 61.47 & 68.36 & 68.17 & 67.04 & 65.91 & 68.30 & 64.33 & 68.76 & 63.18 & 67.43 & 67.71 & 57.80 & 39.71 & 65.00 & 64.17 & 60.25 & 54.29 & \bf{68.83} \\
$\mathrm{RC}$ & $\mathrm{P}$ & 64.17 & 71.41 & 72.79 & 68.00 & 66.06 & 71.47 & 66.54 & 72.04 & 66.86 & 66.86 & 66.86 & 66.37 & 57.43 & 67.00 & 63.43 & 60.49 & 71.71 & \bf{72.05} \\
$\mathrm{RI}$ & $\mathrm{P}$ & 66.12 & 70.77 & 72.14 & 69.71 & 68.29 & 71.60 & 71.30 & 72.00 & 66.37 & 66.86 & 68.08 & 66.37 & 64.00 & 68.25 & 65.63 & 61.96 & 67.43 & \bf{73.33} \\
$\mathrm{CI}$ & $\mathrm{P}$ & 66.12 & 69.46 & 69.46 & 65.89 & 63.77 & 69.07 & 63.34 & 70.06 & 67.11 & 67.83 & 68.57 & 66.86 & 55.14 & 69.50 & 66.86 & 65.14 & 68.57 & \bf{71.42} \\
\hline
\multicolumn {2}{|c|}{Avg} & 66.99 & 71.39 & 72.29 & 71.00 & 69.14 & 72.87 & 67.56 & 73.46 & 70.59 & 70.87 & 67.85 & 68.74 & 54.82 & 71.80 & 69.78 & 65.50 & 61.07 & \bf{76.72} \\
\hline
\end{tabular}}
\label{VisDA2}
\end{table*}

\begin{table*}[h]
\centering \small
\caption{Comparison with the state-of-the-art DA methods of universal multi-source domain adaptation tasks on \textbf{VisDA2017+ImageCLEF-DA} ($M=3$, $\xi_1=\xi_2=\xi_3=0.30$ and $\xi_{12}=\xi_{23}=0.20$) dataset (Backbone: ResNet-50) measured by classification accuracy (\%). (\textbf{Com.}: source-combined. \textbf{Sin.}: single-best.)}
\setlength{\tabcolsep}{0.6mm}{
\begin{tabular}{|c|c|c|c|c|c|c|c|c|c|c|c|c|c|c|c|c|c|c|c|}
\hline
\multirow {2}{*}{$\mathcal{D}_{s_i}$} & \multirow {2}{*}{$\mathcal{D}_{t}$} & \multicolumn {2}{|c|}{Source-only} & \multicolumn {6}{|c|}{Single-best UDA} & \multicolumn {6}{|c|}{Source-combined UDA} & \multicolumn {4}{|c|}{UMDA} \\
\cline{3-20}&& Com. & Sin. & DANN & RTN & IWAN & PADA & OSBP & UAN & DANN & RTN & IWAN & PADA & OSBP & UAN & DCTN & MDAN & MDDA & \bf{UMAN} \\
\hline
$\mathrm{RCI}$ & $\mathrm{S}$ & 58.71 & 65.37 & 67.43 & 67.96 & 69.06 & 68.95 & 63.42 & 67.03 & 59.14 & 66.72 & 61.28 & 61.82 & 54.50 & 68.28 & 40.08 & 52.08 & 46.59 & \bf{70.04} \\
$\mathrm{RCP}$ & $\mathrm{S}$ & 59.44 & 66.99 & 68.87 & 67.24 & 67.74 & 68.83 & 62.82 & 68.89 & 56.32 & 63.96 & 61.25 & 60.83 & 47.10 & 60.42 & 34.36 & 49.41 & 46.15 & \bf{70.82} \\
$\mathrm{RIP}$ & $\mathrm{S}$ & 69.36 & 68.57 & 70.54 & 72.04 & 69.44 & 70.69 & 65.92 & 70.29 & 67.26 & 68.41 & 68.56 & 67.20 & 52.04 & 71.06 & 42.30 & 57.31 & 49.42 & \bf{71.48} \\
$\mathrm{CIP}$ & $\mathrm{S}$ & 66.28 & 62.92 & 65.85 & 67.64 & 62.97 & 65.75 & 57.13 & 65.66 & 67.21 & 67.09 & 67.26 & 65.87 & 52.64 & \bf{78.42} & 42.91 & 60.90 & 46.62 & 67.33 \\
\hline
$\mathrm{SCI}$ & $\mathrm{R}$ & 57.81 & 66.63 & 67.20 & 67.30 & 63.31 & 66.72 & 52.70 & 68.02 & 65.85 & 60.20 & 48.59 & 51.68 & 43.94 & 68.24 & 41.97 & 56.52 & 33.59 & \bf{69.23} \\
$\mathrm{SCP}$ & $\mathrm{R}$ & 54.77 & 66.72 & 67.50 & 68.44 & 60.07 & 67.00 & 54.71 & 68.24 & 63.08 & 60.28 & 43.00 & 51.17 & 37.40 & 66.54 & 36.71 & 56.44 & 33.15 & \bf{68.60} \\
$\mathrm{SIP}$ & $\mathrm{R}$ & 64.08 & 67.93 & 67.37 & 68.59 & 64.45 & 66.93 & 56.50 & 68.32 & 67.95 & 67.68 & 56.63 & 59.74 & 45.95 & 67.79 & 37.33 & 61.53 & 32.64 & \bf{69.48} \\
$\mathrm{CIP}$ & $\mathrm{R}$ & 67.48 & 68.00 & 67.34 & 68.04 & 62.22 & 66.86 & 58.24 & 68.34 & 66.79 & 70.38 & 69.10 & 65.03 & 55.69 & 69.85 & 39.17 & 65.09 & 39.80 & \bf{70.50} \\
\hline
$\mathrm{SRI}$ & $\mathrm{C}$ & 72.74 & 77.16 & 79.06 & 73.37 & 73.50 & 81.50 & 81.23 & 81.24 & 74.45 & 69.80 & 70.29 & 77.14 & 59.14 & 76.41 & 61.71 & 75.18 & 62.29 & \bf{87.63} \\
$\mathrm{SRP}$ & $\mathrm{C}$ & 66.12 & 76.13 & 79.14 & 73.73 & 73.77 & 81.49 & 80.77 & 80.77 & 67.83 & 68.82 & 70.04 & 74.20 & 50.86 & 72.74 & 58.29 & 71.75 & 63.43 & \bf{89.00} \\
$\mathrm{SIP}$ & $\mathrm{C}$ & 69.80 & 75.06 & 78.97 & 73.67 & 73.63 & 80.50 & 80.73 & 81.53 & 74.45 & 69.06 & 80.57 & 77.39 & 57.14 & 76.17 & 61.71 & 72.74 & 56.29 & \bf{85.12} \\
$\mathrm{RIP}$ & $\mathrm{C}$ & 76.17 & 79.44 & 79.84 & 79.67 & 79.76 & 82.06 & 82.69 & 82.79 & 80.82 & 82.77 & 82.53 & 80.08 & 70.86 & 79.35 & 65.43 & 74.45 & 69.14 & \bf{86.50} \\
\hline
$\mathrm{SRC}$ & $\mathrm{I}$ & 70.53 & 77.56 & 78.17 & 75.91 & 76.36 & 78.73 & 73.63 & 79.34 & 71.26 & 76.65 & 77.39 & 72.49 & 57.71 & 71.51 & 61.71 & 73.71 & 58.00 & \bf{81.00} \\
$\mathrm{SRP}$ & $\mathrm{I}$ & 68.08 & 79.21 & 78.49 & 76.26 & 77.47 & 79.24 & 78.21 & 80.59 & 68.32 & 72.98 & 73.47 & 68.08 & 53.71 & 70.04 & 57.43 & 76.00 & 55.71 & \bf{81.38} \\
$\mathrm{SCP}$ & $\mathrm{I}$ & 70.00 & 77.97 & 77.10 & 75.70 & 76.23 & 77.94 & 74.73 & 80.41 & 66.37 & 72.00 & 70.00 & 63.68 & 50.29 & 68.32 & 59.43 & 69.43 & 48.57 & \bf{82.38} \\
$\mathrm{RCP}$ & $\mathrm{I}$ & 66.86 & 78.79 & 77.96 & 76.17 & 77.26 & 79.19 & 78.89 & \bf{81.19} & 69.55 & 72.98 & 71.02 & 71.02 & 60.29 & 70.53 & 70.57 & 75.71 & 55.43 & 80.62 \\
\hline
$\mathrm{SRC}$ & $\mathrm{P}$ & 60.74 & 70.51 & 70.37 & 67.67 & 66.17 & 70.31 & 64.81 & 70.70 & 60.98 & 65.63 & 64.89 & 61.23 & 45.71 & 60.98 & 55.71 & 66.29 & 48.29 & \bf{70.88} \\
$\mathrm{SRI}$ & $\mathrm{P}$ & 63.92 & 70.34 & 69.94 & 68.81 & 67.66 & 70.40 & 67.99 & 70.67 & 59.75 & 66.37 & 64.89 & 61.23 & 49.14 & 62.45 & 51.43 & 67.43 & 50.57 & \bf{72.25} \\
$\mathrm{SCI}$ & $\mathrm{P}$ & 58.53 & 68.93 & 68.96 & 66.47 & 65.17 & 68.51 & 62.74 & 69.21 & 60.98 & 70.29 & 60.00 & 58.53 & 43.43 & 63.18 & 47.71 & 66.00 & 49.14 & \bf{70.13} \\
$\mathrm{RCI}$ & $\mathrm{P}$ & 64.41 & 71.14 & 70.66 & 68.61 & 66.97 & 70.96 & 68.91 & \bf{71.53} & 65.14 & 66.37 & 63.43 & 64.65 & 64.86 & 66.12 & 56.00 & 68.86 & 56.86 & 71.00 \\
\hline
\multicolumn {2}{|c|}{Avg} & 65.29 & 71.77 & 72.54 & 71.16 & 69.66 & 73.13 & 68.34 & 73.74 & 66.68 & 68.92 & 66.21 & 65.65 & 52.62 & 69.42 & 51.10 & 65.84 & 50.08 & \bf{75.69} \\
\hline
\end{tabular}}
\label{VisDA3}
\end{table*}

\begin{table}[h]
\centering \small
\caption{Comparison with the state-of-the-art DA methods of universal multi-source domain adaptation tasks on \textbf{VisDA2017+ImageCLEF-DA} ($M=4$, $\xi_1=\xi_2=\xi_3=0.22$, $\xi_4=0.11$ and $\xi_{12}=\xi_{23}=\xi_{34}=0.00$) dataset (Backbone: ResNet-50) measured by classification accuracy (\%).}
\setlength{\tabcolsep}{0.5mm}{
\begin{tabular}{c|c|c|c|c|c|c|c}
\hline
\multirow {3}{*}{Standards} & Methods & \multicolumn {5}{c|}{\tabincell{c}{VisDA2017+ImageCLEF-DA\\ $(M=4)$}} & \multirow{3}{*}{Avg}\\
\cline{2-7}&Source& $\mathrm{RCIP}$ & $\mathrm{SCIP}$ & $\mathrm{SRIP}$ & $\mathrm{SRCP}$ & $\mathrm{SRCI}$ \\
\cline{2-7}&Target& $\mathrm{S}$ & $\mathrm{R}$ & $\mathrm{C}$ & $\mathrm{I}$ & $\mathrm{P}$ \\
\hline
\multirow{2}{*}{Source-only} & Combined & 55.92 & 53.40 & 76.50 & 71.25 & 60.50 & 63.51 \\
& Single-best & 63.74 & 66.66 & 76.29 & 77.49 & 69.96 & 70.83 \\
\hline
\multirow{7}{*}{\tabincell{c}{Single-best\\UDA}} 
& DANN \cite{ganin2016domain} & 65.79 & 67.21 & 79.03 & 77.70 & 69.83 & 71.91 \\
& RTN \cite{long2016unsupervised} & \bf{67.66} & 67.45 & 73.47 & 75.80 & 67.14 & 70.30 \\
& IWAN \cite{zhang2018importance} & 66.05 & 61.11 & 73.54 & 76.19 & 65.60 & 68.50 \\
& PADA \cite{cao2018partialeccv} & 67.03 & 66.75 & 81.17 & 78.41 & 69.71 & 72.61 \\
& OSBP \cite{saito2018open} & 59.32 & 53.37 & 80.99 & 73.76 & 64.01 & 66.29 \\
& UAN \cite{you2019universal} & 65.64 & 68.09 & 81.09 & 79.70 & 70.20 & 72.94 \\
\hline
\multirow{7}{*}{\tabincell{c}{Source-\\combined\\UDA}} 
& DANN \cite{ganin2016domain} & 57.95 & 59.93 & 76.25 & 64.75 & 59.75 & 63.73 \\
& RTN \cite{long2016unsupervised} & 62.61 & 57.98 & 81.50 & 67.75 & 62.75 & 66.52 \\
& IWAN \cite{zhang2018importance} & 59.81 & 41.86 & 82.25 & 69.00 & 64.75 & 63.53 \\
& PADA \cite{cao2018partialeccv} & 63.09 & 52.81 & 76.75 & 69.25 & 60.00 & 64.38 \\
& OSBP \cite{saito2018open} & 45.41 & 31.24 & 49.75 & 44.50 & 44.25 & 43.03 \\
& UAN \cite{you2019universal} & 56.11 & 56.80 & 75.00 & 67.75 & 61.00 & 63.33 \\
\hline
\multirow{4}{*}{\tabincell{c}{UMDA}} 
& DCTN \cite{xu2018deep} & 35.15 & 38.48 & 66.25 & 55.75 & 50.50 & 49.23 \\
& MDAN \cite{zhao2018adversarial} & 44.11 & 48.68 & 68.50 & 65.25 & 60.50 & 57.41 \\
& MDDA \cite{ZhaoWZGLS0HCK20} & 30.88 & 28.19 & 60.25 & 44.50 & 36.50 & 40.06 \\
& \bf{UMAN} & 62.95 & \bf{72.96} & \bf{88.00} & \bf{83.25} & \bf{70.50} & \bf{75.53} \\
\hline
\end{tabular}}
\label{VisDA4}
\end{table}

We evaluate UMAN and the above methods in various UMDA settings as shown in Table \ref{UMDAsettings}. The classification results are shown in Table \ref{Office-31}, \ref{Office-Home}, \ref{Office-Home3}, \ref{VisDA2}, \ref{VisDA3} and \ref{VisDA4}. Most non-universal methods have negative effects on domain adaptation, performing worse than no adaptation. Even universal DA method (UAN) suffers from the degradation in performance as the number of source domains increases due to the lack of the consideration of the discrepancy between source domains. The proposed UMAN achieves the best performance and outperforms compared methods by large margins in both source-combined UDA and UMDA settings. This means the proposed UMAN accurately learns useful knowledge across different source domains and promotes the adaptation between source and target domains in their common label set.

To explore the transferability criterion in UAN and UMAN, we plot the Probability Density Distribution (\textbf{PDD}) of $w^s$ and $w^t$ in Fig. \ref{Office31-W}. Compared to PDDs in Fig. \ref{A-W-UAN} and \ref{D-W-UAN}, the PDDs in the source shared and private label sets are harder to be distinguished in Fig. \ref{AD-W-UAN}. This indicates that the transferability criterion of UAN is broken by the distribution discrepancy of same classes in the combined source domain, which assumes that the samples drawn from the same class come from the same distribution. While the PDDs in the proposed UMAN have been separated ideally as shown in Fig. \ref{AD-W-UMAN}. Hence, UMAN can mitigate the distribution discrepancy not only between each source domain and target domain, but also between any two distinct source domains.

\begin{figure*}
\centering
\subfigure[UAN: A$\rightarrow$W]{
\begin{minipage}[c]{0.23\textwidth}
\centering
\includegraphics[height=1in,width=1\linewidth]{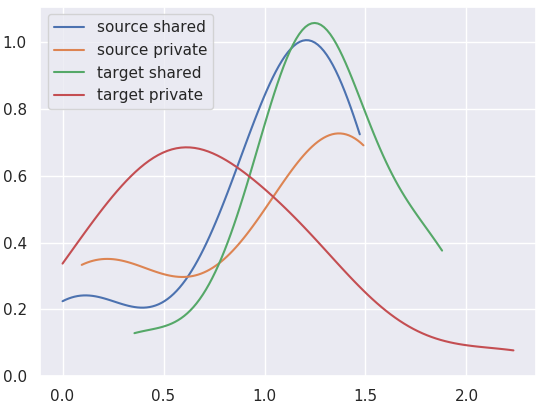}
\label{A-W-UAN}
\end{minipage}}
\subfigure[UAN: D$\rightarrow$W]{
\begin{minipage}[c]{0.23\textwidth}
\centering
\includegraphics[height=1in,width=1\linewidth]{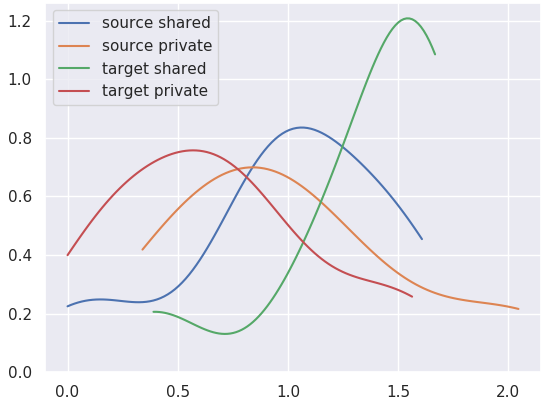}
\label{D-W-UAN}
\end{minipage}}
\subfigure[UAN: AD$\rightarrow$W]{
\begin{minipage}[c]{0.23\textwidth}
\centering
\includegraphics[height=1in,width=1\linewidth]{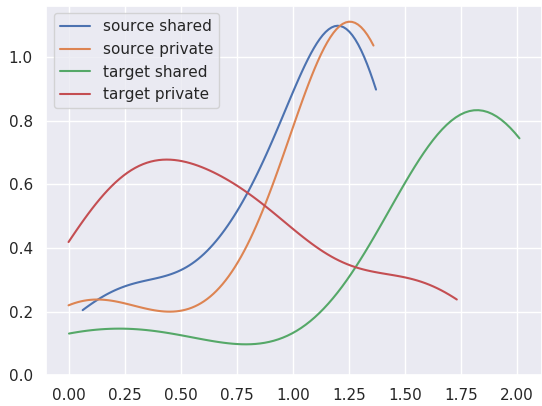}
\label{AD-W-UAN}
\end{minipage}}
\subfigure[UMAN: AD$\rightarrow$W]{
\begin{minipage}[c]{0.23\textwidth}
\centering
\includegraphics[height=1in,width=1\linewidth]{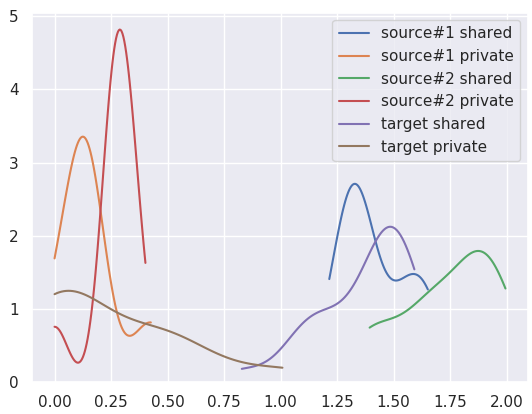}
\label{AD-W-UMAN}
\end{minipage}}
\caption{Probability Density Distribution (\textbf{PDD}) of $w^s$ on (1) 'source shared (\#i)': source samples in $\mathcal{C}$ or $\mathcal{C}_i$ in \ref{AD-W-UMAN}; (2) 'source private (\#i)': source samples in $\overline{\mathcal{C}}_{s}$ or $\overline{\mathcal{C}}_{s_i}$ in \ref{AD-W-UMAN}; and $w^t$ on (3) 'target shared': target samples in $\mathcal{C}$; (4) 'target private': target samples in $\overline{\mathcal{C}}_{t}$. Note that the source domain in \ref{AD-W-UAN} is the combination of A and D.}
\label{Office31-W}
\end{figure*}

\subsection{Analysis of UMAN Settings}
\label{UMAN Settings}
In this section, we compared the proposed UMAN with methods that achieve the best performance in universal single-source DA and closed set multi-source DA setting, i.e. UAN and MDDA respectively. We discuss UMAN settings from two aspects: (1) the number of source domains and (2) the relationship between label sets.

\begin{figure}
\centering
\subfigure[Office-31]{
\begin{minipage}[c]{0.23\textwidth}
\centering
\includegraphics[height=1in,width=1\linewidth]{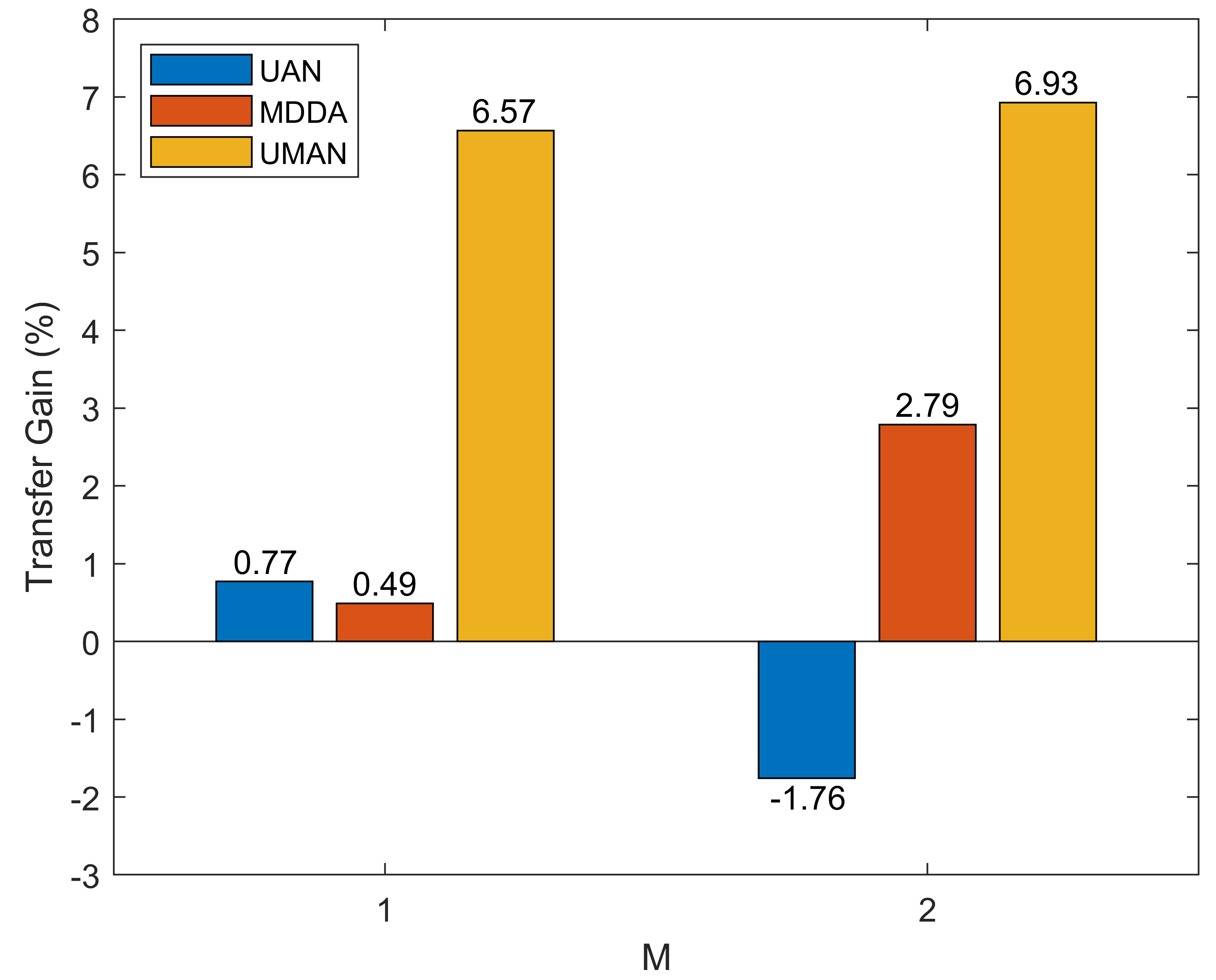}
\label{Office-31M}
\end{minipage}}
\subfigure[Office-Home]{
\begin{minipage}[c]{0.23\textwidth}
\centering
\includegraphics[height=1in,width=1\linewidth]{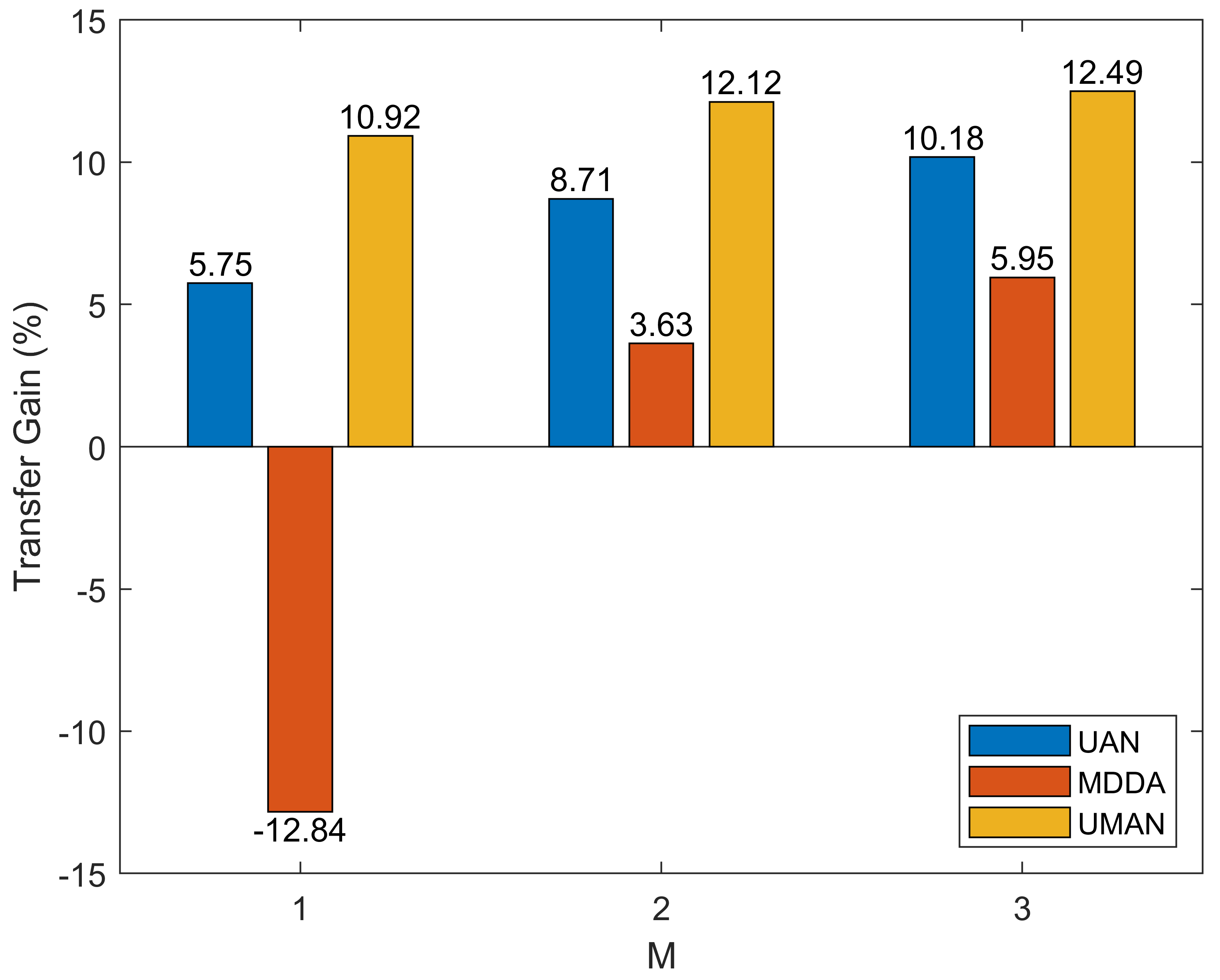}
\label{Office-HomeM}
\end{minipage}}
\subfigure[VisDA+ImageCLEF]{
\begin{minipage}[c]{0.23\textwidth}
\centering
\includegraphics[height=1in,width=1\linewidth]{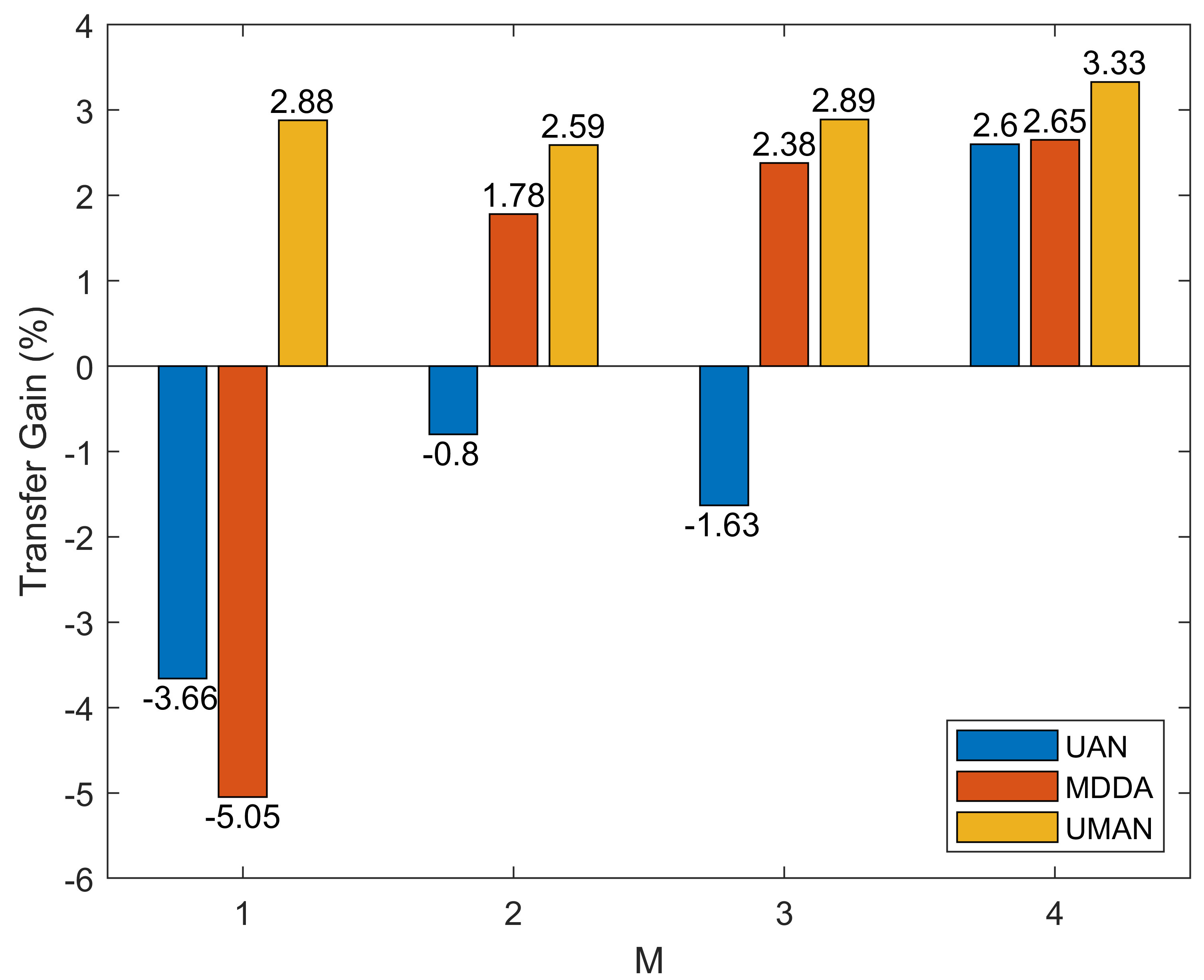}
\label{VisDA2017+ImageCLEF-DAM}
\end{minipage}}
\caption{Transfer gain of UAN (left), MDDA (middle) and UMAN (right) with respect to $M$ on different datasets. Note that negative gain means the accuracy less than that of no transfer.}
\label{vary-M}
\end{figure}

\subsubsection{The Number of Source Domains}
To analyze the impact of the number of source domains on UMDA, we fix the label set of all the source and target domains and use different number of source domains on each dataset. For \textbf{(1)} $\mathrm{D}\rightarrow\mathrm{A}$ and $\mathrm{DW}\rightarrow\mathrm{A}$ (Office-31), fix $|\mathcal{C}_i|=|\mathcal{C}|=10$, $|\overline{\mathcal{C}}_{s_i}|=|\overline{\mathcal{C}}_s|=10$, $|\overline{\mathcal{C}}_t|=11$; For \textbf{(2)} $\mathrm{Cl}\rightarrow\mathrm{Ar}$, $\mathrm{ClPr}\rightarrow\mathrm{Ar}$ and $\mathrm{ClPrRw}\rightarrow\mathrm{Ar}$ (Office-Home), fix $|\mathcal{C}_i|=|\mathcal{C}|=10$, $|\overline{\mathcal{C}}_{s_i}|=|\overline{\mathcal{C}}_s|=5$, $|\overline{\mathcal{C}}_t|=50$; For \textbf{(3)} $\mathrm{C}\rightarrow\mathrm{R}$, $\mathrm{CI}\rightarrow\mathrm{R}$, $\mathrm{CIP}\rightarrow\mathrm{R}$ and $\mathrm{SCIP}\rightarrow\mathrm{R}$ (VisDA2017+ImageClEF-DA), fix $|\mathcal{C}_i|=|\mathcal{C}|=7$, $|\overline{\mathcal{C}}_{s_i}|=|\overline{\mathcal{C}}_s|=3$, $|\overline{\mathcal{C}}_t|=2$. We show the transfer gains, which are compared to no adaptation in the source-combined setting, on three best methods in Fig. \ref{vary-M}. Generally, UAN (the best universal single-source DA method) suffers from negative transfer in some universal multi-source DA settings, and MDDA (the best closed set multi-source DA method) suffers from negative transfer in most universal single-source DA settings. The proposed UMAN outperforms the best methods in their specific DA scenarios and keeps a high transfer gain with all the UMDA settings. Meanwhile, we observe that the transfer gain of the proposed UMAN increases as the number of source domains increases. This indicates that learning across more source domains can encourage the transfer of more knowledge to the target domain.

\subsubsection{The Relationship between Label Sets}
To analyze the impact of the relationship between label sets on UMDA, we fix the number of source domains $M=2$ and select $\mathrm{DW}\rightarrow\mathrm{A}$ task in Office-31, where all the 31 classes are used.

\begin{figure*}
\centering
\subfigure[UAN]{
\begin{minipage}[c]{0.32\textwidth}
\centering
\includegraphics[height=1in,width=1\linewidth]{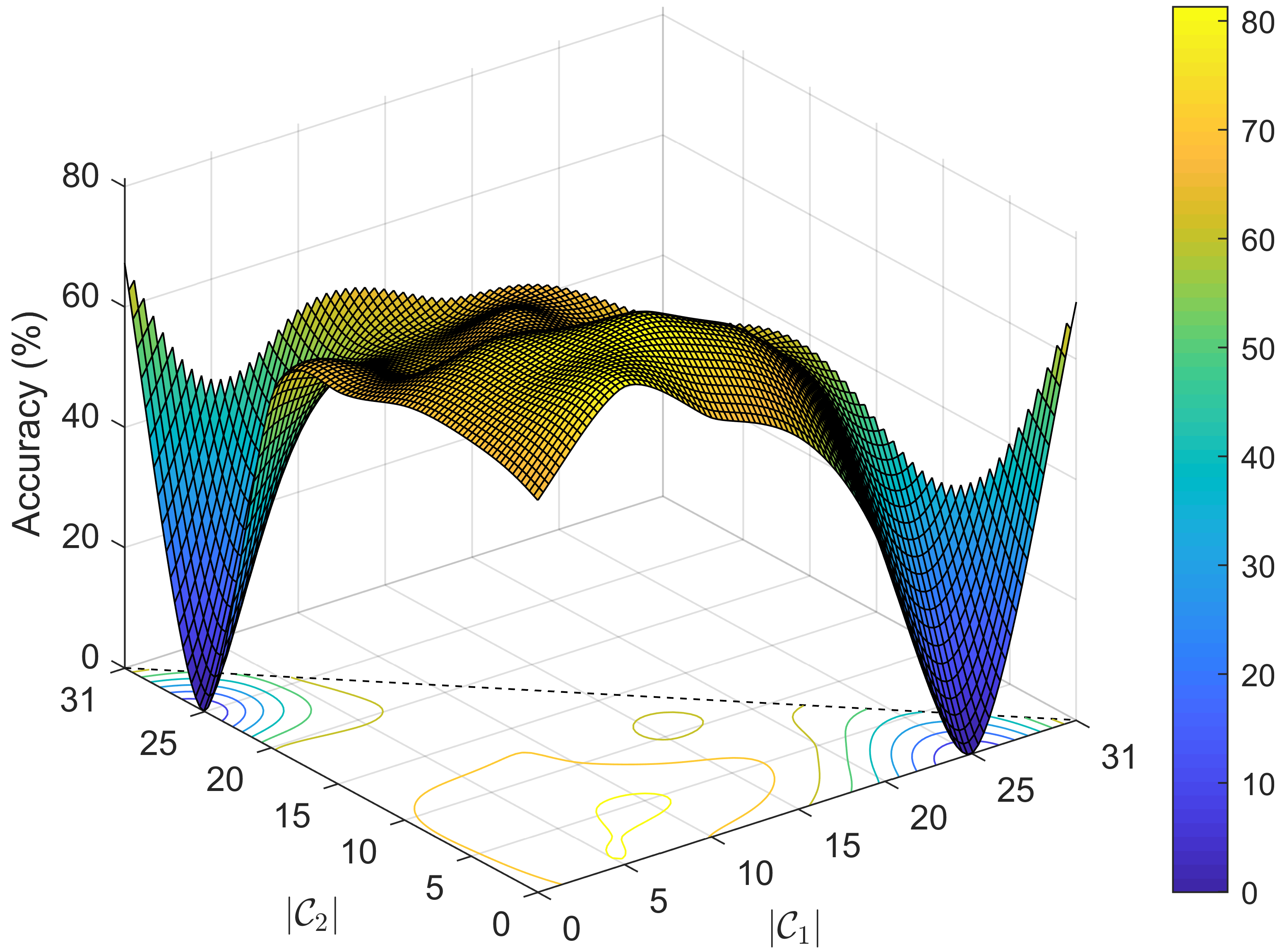}
\label{C-UAN}
\end{minipage}}
\subfigure[MDDA]{
\begin{minipage}[c]{0.32\textwidth}
\centering
\includegraphics[height=1in,width=1\linewidth]{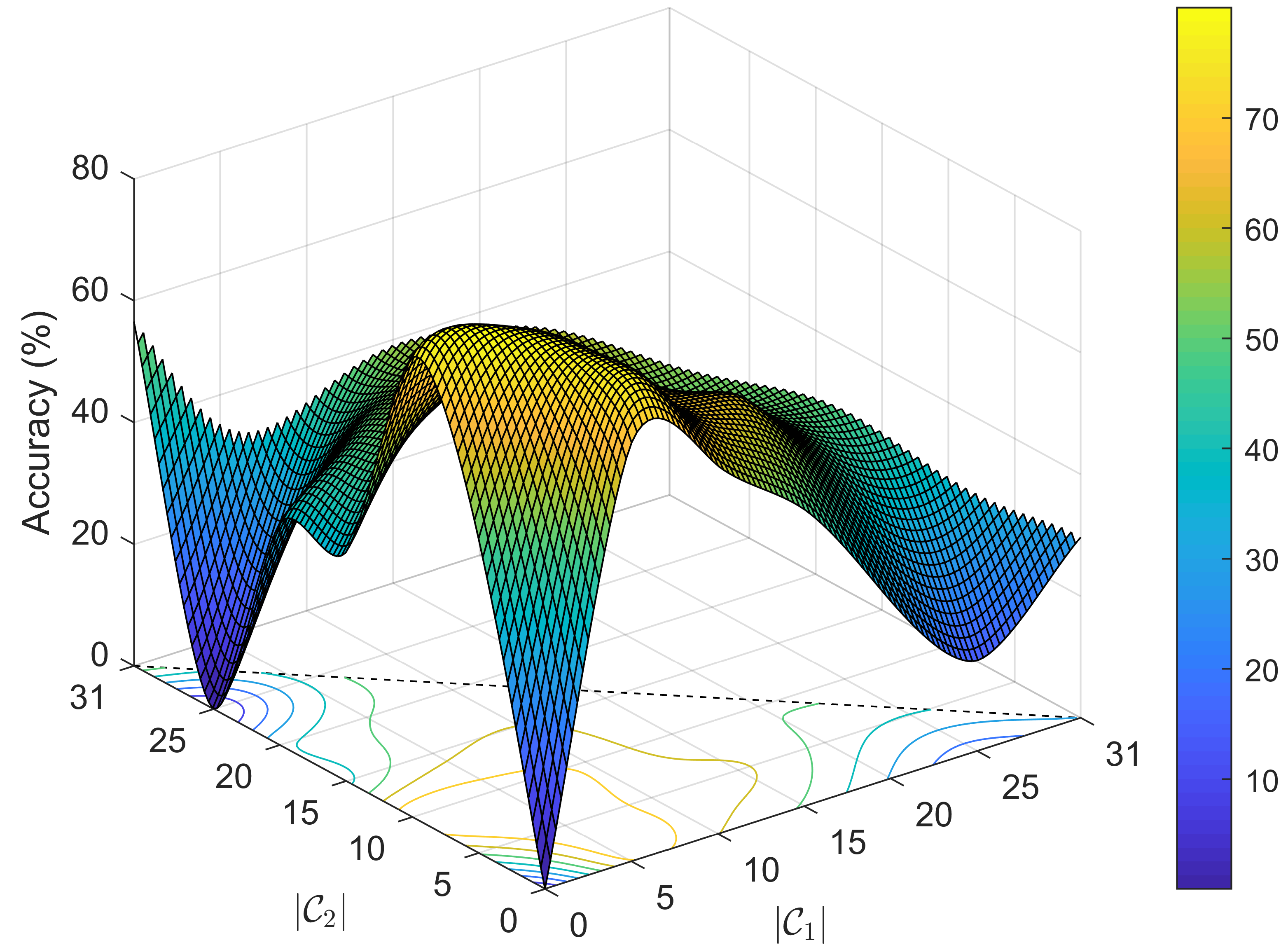}
\label{C-MDDA}
\end{minipage}}
\subfigure[UMAN]{
\begin{minipage}[c]{0.32\textwidth}
\centering
\includegraphics[height=1in,width=1\linewidth]{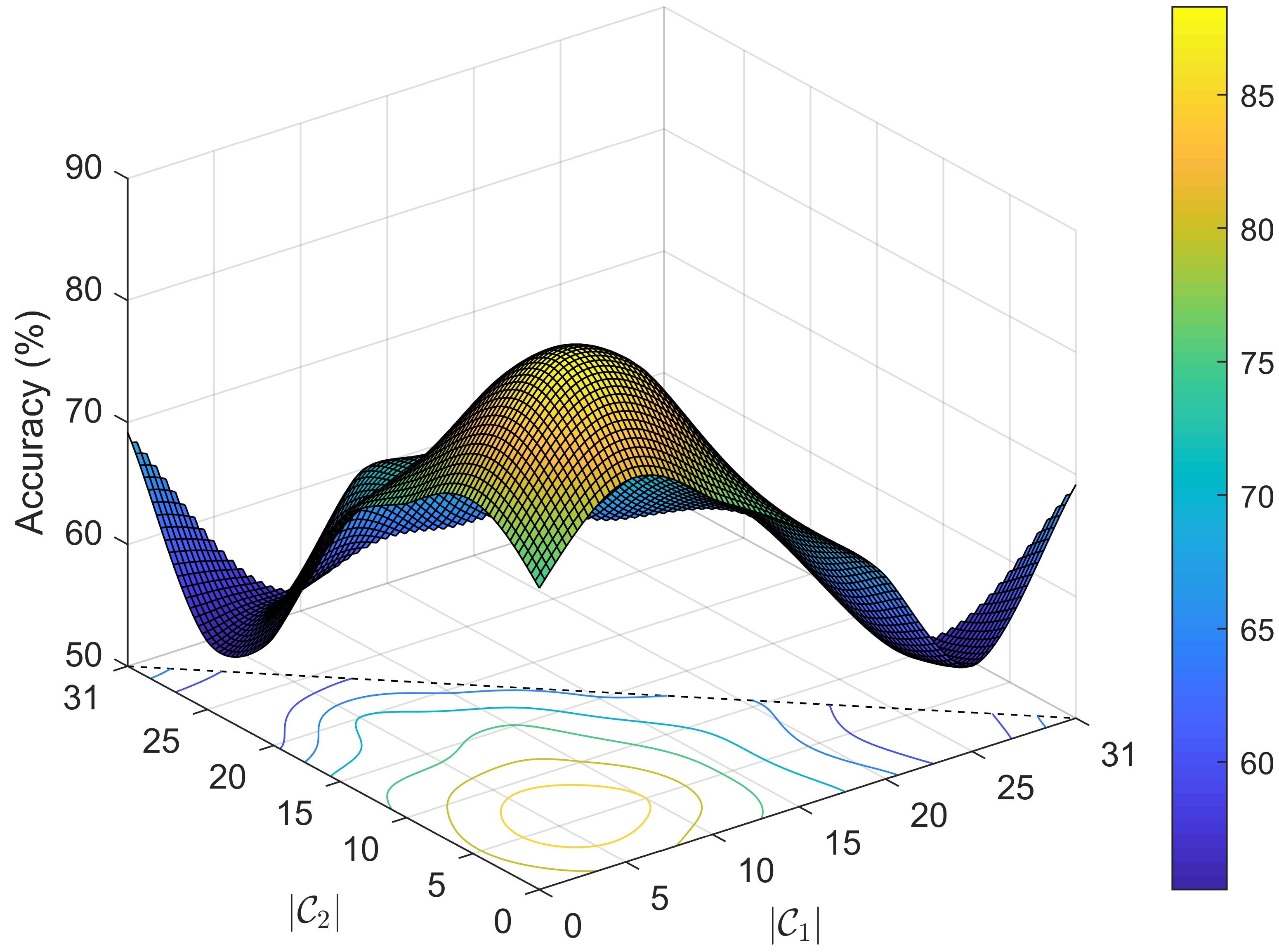}
\label{C-UMAN}
\end{minipage}}
\caption{Accuracy surface with different size of $\mathcal{C}_1$ and $\mathcal{C}_2$ on $\mathrm{DW}\rightarrow\mathrm{A}$ task ($\xi_{12}=0$).}
\label{vary-C}
\end{figure*}

\textbf{Impact of the size of the common label set}. Let $|\mathcal{C}_{s_1} \cap \mathcal{C}_{s_2}|=0$, $|\overline{\mathcal{C}}_{s_1}|=|\overline{\mathcal{C}}_{s_2}|=|\overline{\mathcal{C}}_{t}|-(|\overline{\mathcal{C}}_{t}|\mod 3)$, vary $|\mathcal{C}_{1}|$ and $|\mathcal{C}_{2}|$ from 0 to 31. Note that $|\mathcal{C}_{1}|+|\mathcal{C}_{2}|\leq31$. As shown in Fig. \ref{C-UAN}, the performance of UAN collapse at $(\mathcal{C}_1,\mathcal{C}_2)=(0,25),(25,0)$ with its default hyperparameters. In other combinations of $\mathcal{C}_1$ and $\mathcal{C}_2$, the accuracy of UAN is stable at about 70\% and the mean accuracy of UAN is 60.46\%. As shown in Fig. \ref{C-MDDA}, the performance of MDDA collapse at $(\mathcal{C}_1,\mathcal{C}_2)=(0,0),(0,25)$ with its default hyperparameters. The accuracy of MDDA varies widely with the change of $|\mathcal{C}_{1}|$ and $|\mathcal{C}_{2}|$ and the mean accuracy of MDDA is 49.40\%. In Fig. \ref{C-UMAN}, UMAN shows the more stable performance compared with UAN and MDDA, the mean accuracy of UMAN is 70.83\% and the accuracy vary from 80\% to 90\%. In conclusion, the size of the common label set has great influence on UMDA, especially when the common label sets for distinct source domains differ greatly.

\begin{figure}
\centering
\subfigure[Accuracy w.r.t. $|\mathcal{C}_t|$]{
\begin{minipage}[c]{0.23\textwidth}
\centering
\includegraphics[height=1in,width=0.9\linewidth]{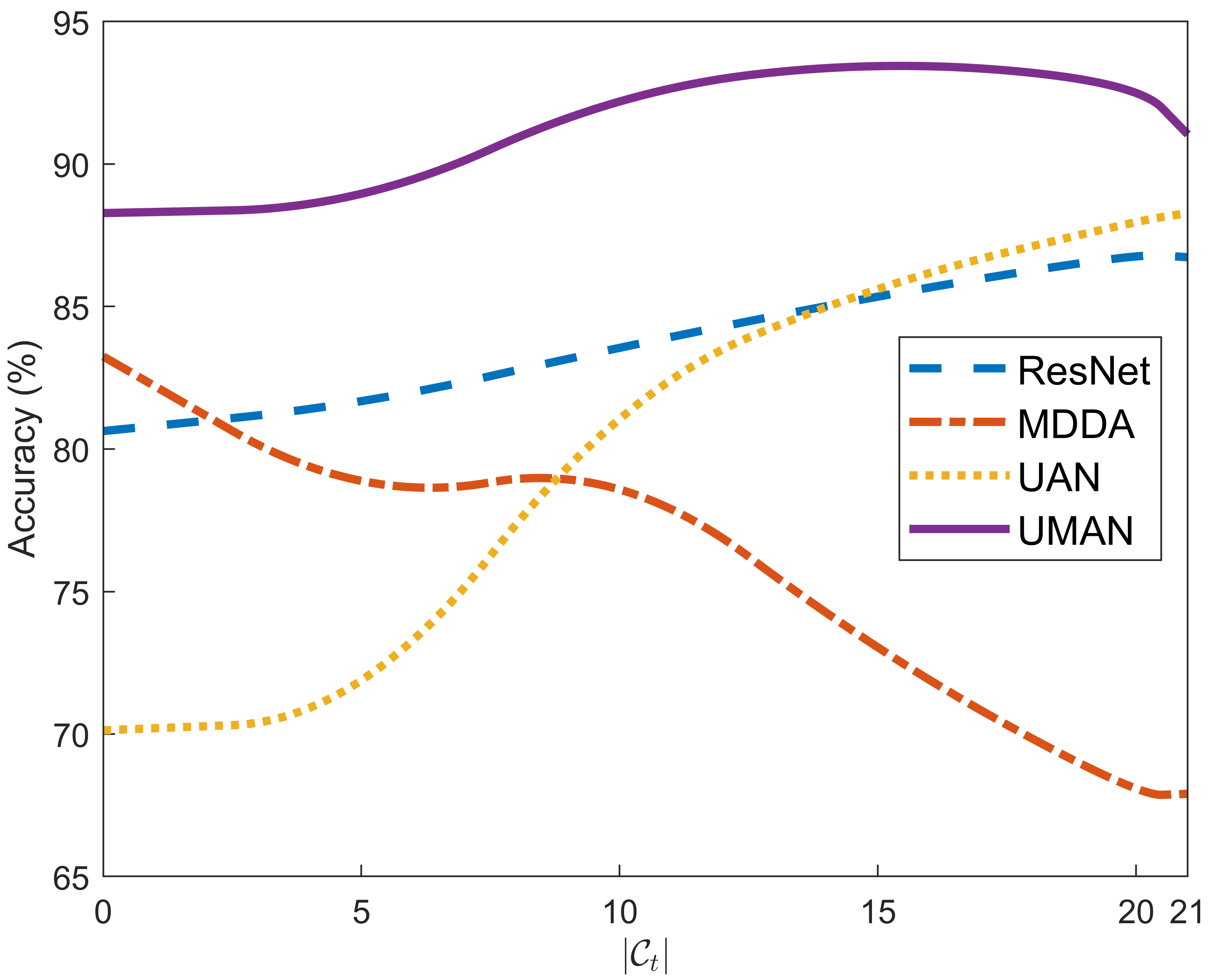}
\label{vary-Ct}
\end{minipage}}
\subfigure[Accuracy w.r.t. $|\mathcal{C}_{1} \cap \mathcal{C}_{2}|$]{
\begin{minipage}[c]{0.23\textwidth}
\centering
\includegraphics[height=1in,width=0.9\linewidth]{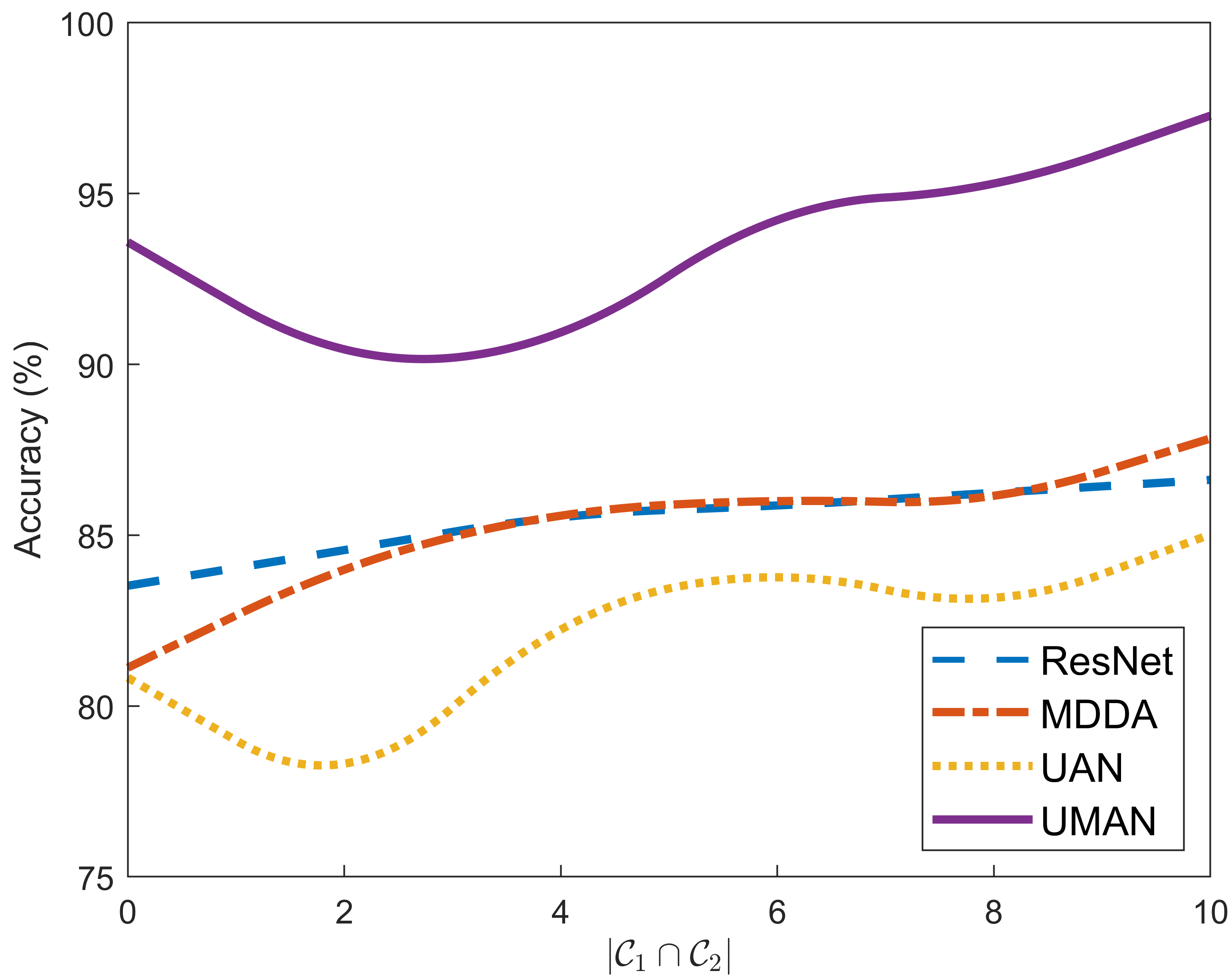}
\label{vary-C12}
\end{minipage}}
\subfigure[Accuracy w.r.t. $|\overline{\mathcal{C}}_{s_1}\cap\overline{\mathcal{C}}_{s_2}|$]{
\begin{minipage}[c]{0.23\textwidth}
\centering
\includegraphics[height=1in,width=0.9\linewidth]{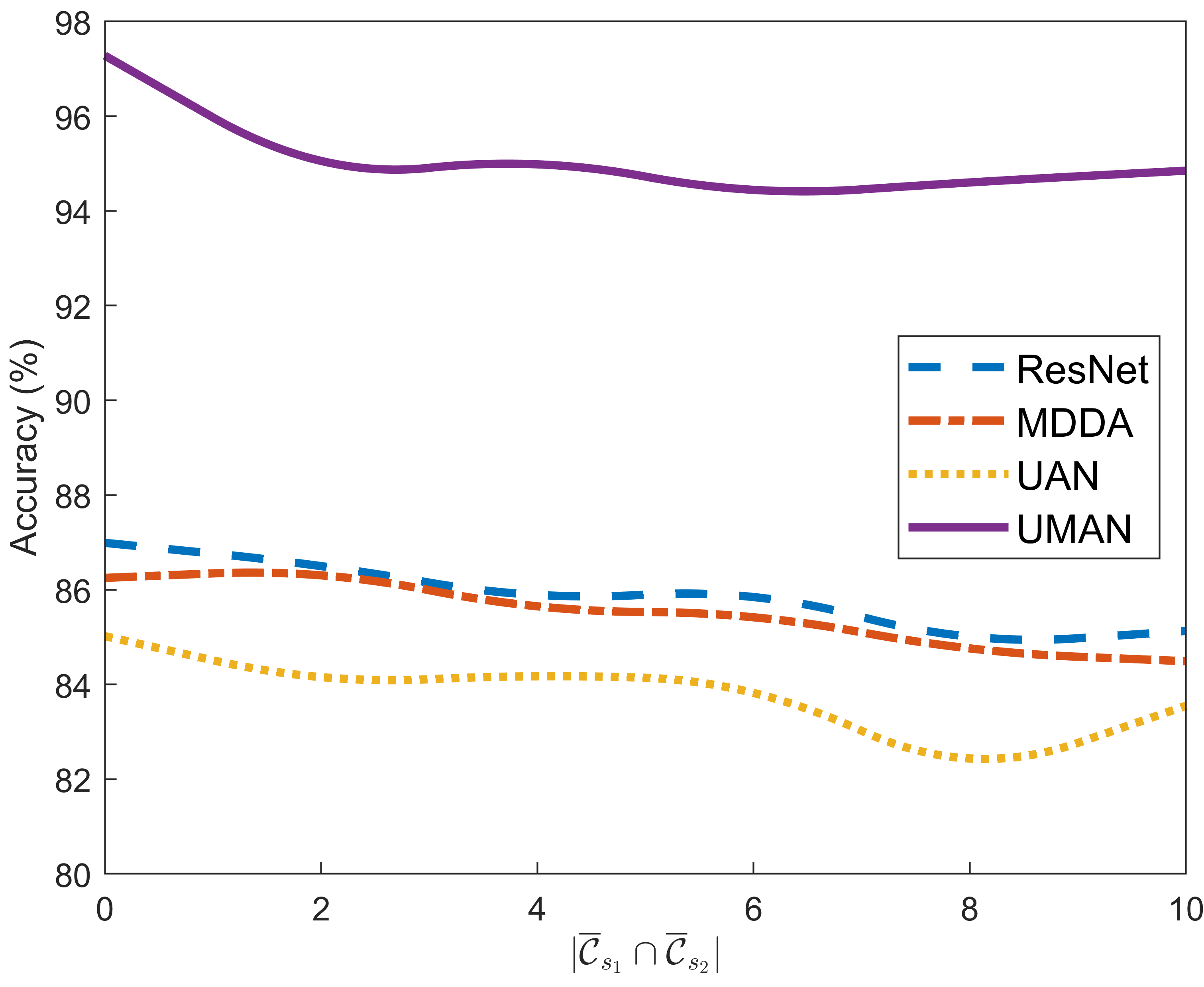}
\label{vary-Cs12}
\end{minipage}}
\caption{Accuracy w.r.t. different components of $\xi_1$, $\xi_2$, $\xi_{12}$ and the threshold on $\mathrm{DW}\rightarrow\mathrm{A}$ task.}
\label{vary}
\end{figure}

\textbf{Impact of the size of target private label set}. Let $|\mathcal{C}_{1}|=|\mathcal{C}_{2}|=5$, $|\mathcal{C}|=10$, $|\mathcal{C}_{s_1} \cap \mathcal{C}_{s_2}|=0$ and $|\overline{\mathcal{C}}_{s_1}|=|\overline{\mathcal{C}}_{s_2}|-(|\overline{\mathcal{C}}_{s_2}|\mod 2)$, vary $|\overline{\mathcal{C}}_{t}|$ from 0 to 21. As shown in Fig. \ref{vary-Ct}, UAN suffers from negative transfer when $|\overline{\mathcal{C}}_{t}|<15$ and MDDA suffers from negative transfer when $|\overline{\mathcal{C}}_{t}|<2$. The proposed UMAN improve the adaptation with all size of $\overline{\mathcal{C}}_{t}$ in UMDA. This indicates that specific universal single-source DA methods and closed set multi-source DA methods are sensitive to the size of target private label set, while UMAN not. 

\textbf{Impact of the size of the intersection of the source common label sets}. Let $|\mathcal{C}|=10$, $|\overline{\mathcal{C}}_{s_1}|=|\overline{\mathcal{C}}_{s_2}|=5$, $|\mathcal{C}_{s}|=20$, $|\overline{\mathcal{C}}_{t}|=11$ and vary $|\mathcal{C}_{1} \cap \mathcal{C}_{2}|$ from 0 to 10. For simplicity, we let $|\mathcal{C}_{1}|=|\mathcal{C}_{2}|-(|\mathcal{C}_{2}|\mod2)$. As shown in Fig. \ref{vary-C12}, UAN is affected by distribution discrepancy in multiple source domains, and it cannot promote the performance with various sizes of the intersection. MDDA cannot promote the adaptation compared with source-only. While the proposed UMAN accomplishes domain adaptation tasks and improves the performance significantly. Overall, a larger size of the intersection of the source common label sets benefits UMDA.

\textbf{Impact of the size of the intersection of the source private label sets}. Let $|\mathcal{C}_1|=|\mathcal{C}_{2}|=|\mathcal{C}|=10$, $|\overline{\mathcal{C}}_{s}|=10$ and $|\overline{\mathcal{C}}_{t}|=11$, vary $|\overline{\mathcal{C}}_{s_1}\cap\overline{\mathcal{C}}_{s_2}|$ from 0 to 10. For simplicity, we let $|\overline{\mathcal{C}}_{s_1}|=|\overline{\mathcal{C}}_{s_2}|-(|\overline{\mathcal{C}}_{s_2}|\mod2)$. As shown in Fig. \ref{vary-Cs12}, UAN and MDDA cannot promote UMDA, while the proposed UMAN keeps a large transfer gain with varying size of $\overline{\mathcal{C}}_{s_1}\cap\overline{\mathcal{C}}_{s_2}$. In conclusion, the intersection of the source private label sets has little effect on UMDA, and no intersection is advocated.

\subsection{Theoretical Verification}
\label{Analysis of UMAN}
In Section \ref{Theoretical Guarantees for UMAN}, we provide two theorems for UMAN. Here, we implement feature dimensionality reduction and visualization to verify these two theorems. As shown in Fig. \ref{T1}, samples in the common label set of two source domains are aligned and others are separated. This is consistent with Theorem \ref{theorem1}. In Fig. \ref{T2}, samples in the common label set of source and target domains are aligned and others are clustered in their private label set. This is consistent with Theorem \ref{theorem2}.

\begin{figure}
\centering
\subfigure[Verification of Theorem \ref{theorem1}]{
\begin{minipage}[c]{0.23\textwidth}
\centering
\includegraphics[height=1in,width=1\linewidth]{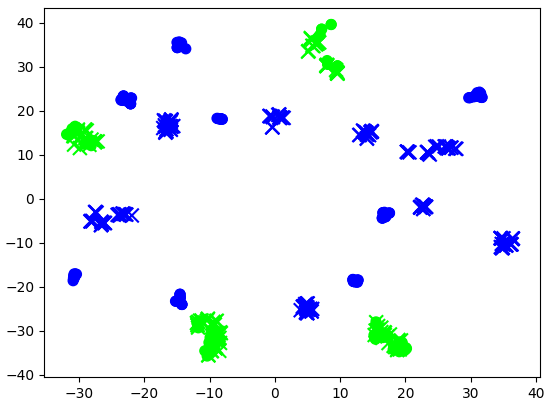}
\label{T1}
\end{minipage}}
\subfigure[Verification of Theorem \ref{theorem2}]{
\begin{minipage}[c]{0.23\textwidth}
\centering
\includegraphics[height=1in,width=1\linewidth]{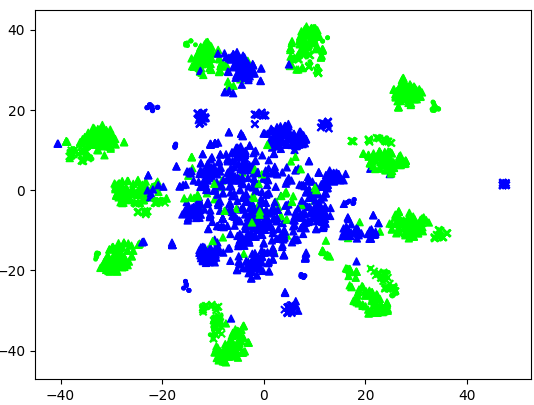}
\label{T2}
\end{minipage}}
\caption{\textbf{t-SNE visualization} on $\mathrm{DW}\rightarrow\mathrm{A}$ task. In Fig. \ref{T1}, samples come from both $\mathrm{D}$ (circle) and $\mathrm{W}$ (cross) are colored by their labels: green in their common label set ($\mathcal{C}_{s_1}\cap\mathcal{C}_{s_2}$) and blue in their private label set. In Fig. \ref{T2}, samples come from both $\mathrm{D}$ (circle), $\mathrm{W}$ (cross) and $\mathrm{A}$ (triangle) are colored by their labels: green in their common label set ($\mathcal{C}$) and blue in their private label set.}
\label{T}
\end{figure}

\section{Conclusion}
In this paper, we formulated a new domain adaptation setting called universal multi-source domain adaptation (UMDA), where the relationship between each source label set and target label set is unknown. Compared to universal domain adaptation and multi-source domain adaptation, the new challenge in UMDA can be summarized as follows. Firstly, domain discrepancy exists not only between each source domain and target domain, but also between any two distinct source domains. Secondly, as the number of source domains increases, the relationship between label sets of source and target domains becomes more complex. To address these challenges, we proposed a universal multi-source adaptation network (UMAN) with a novel margin vector for UMDA. The margin vector helps adversarial training to better align the multiple source domains and target domain in the common label set, and doesn't incur any increase of model complexity as the number of source domains increases. Moreover, the theoretic guarantees were provided for alignment of distribution of source and target domains in the common label set. Massive experimental results show that UMAN outperforms the state-of-the-art domain adaptation methods by large margins. 

\section*{Acknowledgments}
This work was supported in part by the National Natural Science Foundation of China under Grant 62071242, 61671252, 61571233 and 61901229; the Natural Science Research of Higher Education Institutions of Jiangsu Province under Grant 19KJB510008.

\bibliographystyle{IEEEtran}
\bibliography{UMAN}

\begin{thebibliography}{10}
\providecommand{\url}[1]{#1}
\csname url@samestyle\endcsname
\providecommand{\newblock}{\relax}
\providecommand{\bibinfo}[2]{#2}
\providecommand{\BIBentrySTDinterwordspacing}{\spaceskip=0pt\relax}
\providecommand{\BIBentryALTinterwordstretchfactor}{4}
\providecommand{\BIBentryALTinterwordspacing}{\spaceskip=\fontdimen2\font plus
\BIBentryALTinterwordstretchfactor\fontdimen3\font minus
  \fontdimen4\font\relax}
\providecommand{\BIBforeignlanguage}[2]{{%
\expandafter\ifx\csname l@#1\endcsname\relax
\typeout{** WARNING: IEEEtran.bst: No hyphenation pattern has been}%
\typeout{** loaded for the language `#1'. Using the pattern for}%
\typeout{** the default language instead.}%
\else
\language=\csname l@#1\endcsname
\fi
#2}}
\providecommand{\BIBdecl}{\relax}
\BIBdecl

\bibitem{qi2008two}
G.-J. Qi, X.-S. Hua, Y.~Rui, J.~Tang, and H.-J. Zhang, ``Two-dimensional
  multilabel active learning with an efficient online adaptation model for
  image classification,'' \emph{IEEE Transactions on Pattern Analysis and
  Machine Intelligence}, vol.~31, no.~10, pp. 1880--1897, 2008.

\bibitem{joshi2012scalable}
A.~J. Joshi, F.~Porikli, and N.~P. Papanikolopoulos, ``Scalable active learning
  for multiclass image classification,'' \emph{IEEE transactions on pattern
  analysis and machine intelligence}, vol.~34, no.~11, pp. 2259--2273, 2012.

\bibitem{mensink2013distance}
T.~Mensink, J.~Verbeek, F.~Perronnin, and G.~Csurka, ``Distance-based image
  classification: Generalizing to new classes at near-zero cost,'' \emph{IEEE
  transactions on pattern analysis and machine intelligence}, vol.~35, no.~11,
  pp. 2624--2637, 2013.

\bibitem{akata2013good}
Z.~Akata, F.~Perronnin, Z.~Harchaoui, and C.~Schmid, ``Good practice in
  large-scale learning for image classification,'' \emph{IEEE Transactions on
  Pattern Analysis and Machine Intelligence}, vol.~36, no.~3, pp. 507--520,
  2013.

\bibitem{hayat2014deep}
M.~Hayat, M.~Bennamoun, and S.~An, ``Deep reconstruction models for image set
  classification,'' \emph{IEEE transactions on pattern analysis and machine
  intelligence}, vol.~37, no.~4, pp. 713--727, 2014.

\bibitem{akata2015label}
Z.~Akata, F.~Perronnin, Z.~Harchaoui, and C.~Schmid, ``Label-embedding for
  image classification,'' \emph{IEEE transactions on pattern analysis and
  machine intelligence}, vol.~38, no.~7, pp. 1425--1438, 2015.

\bibitem{rao2018runtime}
Y.~Rao, J.~Lu, J.~Lin, and J.~Zhou, ``Runtime network routing for efficient
  image classification,'' \emph{IEEE transactions on pattern analysis and
  machine intelligence}, vol.~41, no.~10, pp. 2291--2304, 2018.

\bibitem{busto2018open}
P.~P. Busto, A.~Iqbal, and J.~Gall, ``Open set domain adaptation for image and
  action recognition,'' \emph{IEEE transactions on pattern analysis and machine
  intelligence}, 2018.

\bibitem{you2019universal}
K.~You, M.~Long, Z.~Cao, J.~Wang, and M.~I. Jordan, ``Universal domain
  adaptation,'' in \emph{Proceedings of the IEEE Conference on Computer Vision
  and Pattern Recognition}, 2019, pp. 2720--2729.

\bibitem{abs-2001-05071}
O.~Lifshitz and L.~Wolf, ``A sample selection approach for universal domain
  adaptation,'' \emph{CoRR}, vol. abs/2001.05071, 2020.

\bibitem{abs-2002-07953}
K.~Saito, D.~Kim, S.~Sclaroff, and K.~Saenko, ``Universal domain adaptation
  through self supervision,'' \emph{CoRR}, vol. abs/2002.07953, 2020.

\bibitem{kundu2020universal}
J.~N. Kundu, N.~Venkat, R.~V. Babu \emph{et~al.}, ``Universal source-free
  domain adaptation,'' in \emph{Proceedings of the IEEE/CVF Conference on
  Computer Vision and Pattern Recognition}, 2020, pp. 4544--4553.

\bibitem{tzeng2014deep}
E.~Tzeng, J.~Hoffman, N.~Zhang, K.~Saenko, and T.~Darrell, ``Deep domain
  confusion: Maximizing for domain invariance,'' \emph{arXiv preprint
  arXiv:1412.3474}, 2014.

\bibitem{long2015learning}
M.~Long, Y.~Cao, J.~Wang, and M.~Jordan, ``Learning transferable features with
  deep adaptation networks,'' in \emph{International conference on machine
  learning}, 2015, pp. 97--105.

\bibitem{ganin2016domain}
Y.~Ganin, E.~Ustinova, H.~Ajakan, P.~Germain, H.~Larochelle, F.~Laviolette,
  M.~Marchand, and V.~Lempitsky, ``Domain-adversarial training of neural
  networks,'' \emph{The Journal of Machine Learning Research}, vol.~17, no.~1,
  pp. 2096--2030, 2016.

\bibitem{haeusser2017associative}
P.~Haeusser, T.~Frerix, A.~Mordvintsev, and D.~Cremers, ``Associative domain
  adaptation,'' in \emph{Proceedings of the IEEE International Conference on
  Computer Vision}, 2017, pp. 2765--2773.

\bibitem{tzeng2015simultaneous}
E.~Tzeng, J.~Hoffman, T.~Darrell, and K.~Saenko, ``Simultaneous deep transfer
  across domains and tasks,'' in \emph{Proceedings of the IEEE International
  Conference on Computer Vision}, 2015, pp. 4068--4076.

\bibitem{long2016unsupervised}
M.~Long, H.~Zhu, J.~Wang, and M.~I. Jordan, ``Unsupervised domain adaptation
  with residual transfer networks,'' in \emph{Advances in neural information
  processing systems}, 2016, pp. 136--144.

\bibitem{DBLP:conf/cvpr/TzengHSD17}
E.~Tzeng, J.~Hoffman, K.~Saenko, and T.~Darrell, ``Adversarial discriminative
  domain adaptation,'' in \emph{{IEEE} Conference on Computer Vision and
  Pattern Recognition, {CVPR}}, 2017, pp. 2962--2971.

\bibitem{saito2018maximum}
K.~Saito, K.~Watanabe, Y.~Ushiku, and T.~Harada, ``Maximum classifier
  discrepancy for unsupervised domain adaptation,'' in \emph{Proceedings of the
  IEEE Conference on Computer Vision and Pattern Recognition}, 2018, pp.
  3723--3732.

\bibitem{long2018conditional}
M.~Long, Z.~Cao, J.~Wang, and M.~I. Jordan, ``Conditional adversarial domain
  adaptation,'' in \emph{Advances in Neural Information Processing Systems},
  2018, pp. 1640--1650.

\bibitem{Ben-DavidBCKPV10}
S.~Ben{-}David, J.~Blitzer, K.~Crammer, A.~Kulesza, F.~Pereira, and J.~W.
  Vaughan, ``A theory of learning from different domains,'' \emph{Mach.
  Learn.}, vol.~79, no. 1-2, pp. 151--175, 2010.

\bibitem{bousmalis2017unsupervised}
K.~Bousmalis, N.~Silberman, D.~Dohan, D.~Erhan, and D.~Krishnan, ``Unsupervised
  pixel-level domain adaptation with generative adversarial networks,'' in
  \emph{Proceedings of the IEEE conference on computer vision and pattern
  recognition}, 2017, pp. 3722--3731.

\bibitem{sankaranarayanan2018generate}
S.~Sankaranarayanan, Y.~Balaji, C.~D. Castillo, and R.~Chellappa, ``Generate to
  adapt: Aligning domains using generative adversarial networks,'' in
  \emph{Proceedings of the IEEE Conference on Computer Vision and Pattern
  Recognition}, 2018, pp. 8503--8512.

\bibitem{hu2018duplex}
L.~Hu, M.~Kan, S.~Shan, and X.~Chen, ``Duplex generative adversarial network
  for unsupervised domain adaptation,'' in \emph{Proceedings of the IEEE
  Conference on Computer Vision and Pattern Recognition}, 2018, pp. 1498--1507.

\bibitem{liu2018detach}
Y.-C. Liu, Y.-Y. Yeh, T.-C. Fu, S.-D. Wang, W.-C. Chiu, and Y.-C. Frank~Wang,
  ``Detach and adapt: Learning cross-domain disentangled deep representation,''
  in \emph{Proceedings of the IEEE Conference on Computer Vision and Pattern
  Recognition}, 2018, pp. 8867--8876.

\bibitem{murez2018image}
Z.~Murez, S.~Kolouri, D.~Kriegman, R.~Ramamoorthi, and K.~Kim, ``Image to image
  translation for domain adaptation,'' in \emph{Proceedings of the IEEE
  Conference on Computer Vision and Pattern Recognition}, 2018, pp. 4500--4509.

\bibitem{huang2018auggan}
S.-W. Huang, C.-T. Lin, S.-P. Chen, Y.-Y. Wu, P.-H. Hsu, and S.-H. Lai,
  ``Auggan: Cross domain adaptation with gan-based data augmentation,'' in
  \emph{Proceedings of the European Conference on Computer Vision (ECCV)},
  2018, pp. 718--731.

\bibitem{DBLP:conf/cvpr/VolpiMSM18}
R.~Volpi, P.~Morerio, S.~Savarese, and V.~Murino, ``Adversarial feature
  augmentation for unsupervised domain adaptation,'' in \emph{{IEEE} Conference
  on Computer Vision and Pattern Recognition, {CVPR}}, 2018, pp. 5495--5504.

\bibitem{chadha2019improved}
A.~Chadha and Y.~Andreopoulos, ``Improved techniques for adversarial
  discriminative domain adaptation,'' \emph{IEEE Transactions on Image
  Processing}, vol.~29, pp. 2622--2637, 2019.

\bibitem{chen2020domain}
S.~Chen, M.~Harandi, X.~Jin, and X.~Yang, ``Domain adaptation by joint
  distribution invariant projections,'' \emph{IEEE Transactions on Image
  Processing}, vol.~29, pp. 8264--8277, 2020.

\bibitem{chen2019graph}
Y.~Chen, S.~Song, S.~Li, and C.~Wu, ``A graph embedding framework for maximum
  mean discrepancy-based domain adaptation algorithms,'' \emph{IEEE
  Transactions on Image Processing}, vol.~29, pp. 199--213, 2019.

\bibitem{li2020generating}
R.~Li, W.~Cao, S.~Wu, and H.-S. Wong, ``Generating target image-label pairs for
  unsupervised domain adaptation,'' \emph{IEEE Transactions on Image
  Processing}, vol.~29, pp. 7997--8011, 2020.

\bibitem{li2020aligning}
Y.~Li, W.~Hu, H.~Li, H.~Dong, B.~Zhang, and Q.~Tian, ``Aligning discriminative
  and representative features: An unsupervised domain adaptation method for
  building damage assessment,'' \emph{IEEE Transactions on Image Processing},
  vol.~29, pp. 6110--6122, 2020.

\bibitem{DBLP:conf/cvpr/CaoL0J18}
Z.~Cao, M.~Long, J.~Wang, and M.~I. Jordan, ``Partial transfer learning with
  selective adversarial networks,'' in \emph{{IEEE} Conference on Computer
  Vision and Pattern Recognition, {CVPR}}, 2018, pp. 2724--2732.

\bibitem{zhang2018importance}
J.~Zhang, Z.~Ding, W.~Li, and P.~Ogunbona, ``Importance weighted adversarial
  nets for partial domain adaptation,'' in \emph{Proceedings of the IEEE
  Conference on Computer Vision and Pattern Recognition}, 2018, pp. 8156--8164.

\bibitem{cao2018partialeccv}
Z.~Cao, L.~Ma, M.~Long, and J.~Wang, ``Partial adversarial domain adaptation,''
  in \emph{Proceedings of the European Conference on Computer Vision, {ECCV}},
  2018, pp. 135--150.

\bibitem{panareda2017open}
P.~Panareda~Busto and J.~Gall, ``Open set domain adaptation,'' in
  \emph{Proceedings of the IEEE International Conference on Computer Vision},
  2017, pp. 754--763.

\bibitem{saito2018open}
K.~Saito, S.~Yamamoto, Y.~Ushiku, and T.~Harada, ``Open set domain adaptation
  by backpropagation,'' in \emph{Proceedings of the European Conference on
  Computer Vision, {ECCV}}, 2018, pp. 153--168.

\bibitem{luo2020progressive}
Y.~Luo, Z.~Wang, Z.~Huang, and M.~Baktashmotlagh, ``Progressive graph learning
  for open-set domain adaptation,'' \emph{arXiv preprint arXiv:2006.12087},
  2020.

\bibitem{xu2018deep}
R.~Xu, Z.~Chen, W.~Zuo, J.~Yan, and L.~Lin, ``Deep cocktail network:
  Multi-source unsupervised domain adaptation with category shift,'' in
  \emph{Proceedings of the IEEE Conference on Computer Vision and Pattern
  Recognition}, 2018, pp. 3964--3973.

\bibitem{zhao2018adversarial}
H.~Zhao, S.~Zhang, G.~Wu, J.~M. Moura, J.~P. Costeira, and G.~J. Gordon,
  ``Adversarial multiple source domain adaptation,'' in \emph{Advances in
  neural information processing systems}, 2018, pp. 8559--8570.

\bibitem{ZhaoWZGLS0HCK20}
S.~Zhao, G.~Wang, S.~Zhang, Y.~Gu, Y.~Li, Z.~Song, P.~Xu, R.~Hu, H.~Chai, and
  K.~Keutzer, ``Multi-source distilling domain adaptation,'' in
  \emph{Proceedings of the AAAI Conference on Artificial Intelligence}, 2020,
  pp. 12\,975--12\,983.

\bibitem{zhao2020madan}
S.~Zhao, B.~Li, X.~Yue, P.~Xu, and K.~Keutzer, ``Madan: Multi-source
  adversarial domain aggregation network for domain adaptation,'' \emph{arXiv},
  pp. arXiv--2003, 2020.

\bibitem{li2020multi}
K.~Li, J.~Lu, H.~Zuo, and G.~Zhang, ``Multi-source domain adaptation with
  distribution fusion and relationship extraction,'' in \emph{2020
  International Joint Conference on Neural Networks (IJCNN)}.\hskip 1em plus
  0.5em minus 0.4em\relax IEEE, 2020, pp. 1--6.

\bibitem{lin2020multi}
C.~Lin, S.~Zhao, L.~Meng, and T.-S. Chua, ``Multi-source domain adaptation for
  visual sentiment classification.'' in \emph{AAAI}, 2020, pp. 2661--2668.

\bibitem{li2020mutual}
Z.~Li, Z.~Zhao, Y.~Guo, H.~Shen, and J.~Ye, ``Mutual learning network for
  multi-source domain adaptation,'' \emph{arXiv preprint arXiv:2003.12944},
  2020.

\bibitem{PengBXHSW19}
X.~Peng, Q.~Bai, X.~Xia, Z.~Huang, K.~Saenko, and B.~Wang, ``Moment matching
  for multi-source domain adaptation,'' in \emph{International Conference on
  Computer Vision, {ICCV}}, 2019, pp. 1406--1415.

\bibitem{0002ZWCMG18}
H.~Zhao, S.~Zhang, G.~Wu, J.~P. Costeira, J.~M.~F. Moura, and G.~J. Gordon,
  ``Multiple source domain adaptation with adversarial learning,'' in
  \emph{International Conference on Learning Representations, {ICLR}}, 2018.

\bibitem{BlitzerCKPW07}
J.~Blitzer, K.~Crammer, A.~Kulesza, F.~Pereira, and J.~Wortman, ``Learning
  bounds for domain adaptation,'' in \emph{NeurIPS}.\hskip 1em plus 0.5em minus
  0.4em\relax Curran Associates, Inc., 2007, pp. 129--136.

\bibitem{abs-2002-12169}
S.~Zhao, B.~Li, C.~Reed, P.~Xu, and K.~Keutzer, ``Multi-source domain
  adaptation in the deep learning era: {A} systematic survey,'' \emph{CoRR},
  vol. abs/2002.12169, 2020.

\bibitem{MansourMR08}
Y.~Mansour, M.~Mohri, and A.~Rostamizadeh, ``Domain adaptation with multiple
  sources,'' in \emph{NeurIPS}, 2008, pp. 1041--1048.

\bibitem{HoffmanMZ18}
J.~Hoffman, M.~Mohri, and N.~Zhang, ``Algorithms and theory for multiple-source
  adaptation,'' in \emph{NeurIPS}, 2018, pp. 8256--8266.

\bibitem{LiMDC18}
Y.~Li, M.~Murias, G.~Dawson, and D.~E. Carlson, ``Extracting relationships by
  multi-domain matching,'' in \emph{NeurIPS}, 2018, pp. 6799--6810.

\bibitem{koltchinskii2002empirical}
V.~Koltchinskii, D.~Panchenko \emph{et~al.}, ``Empirical margin distributions
  and bounding the generalization error of combined classifiers,'' \emph{The
  Annals of Statistics}, vol.~30, no.~1, pp. 1--50, 2002.

\bibitem{zhang2019bridging}
Y.~Zhang, T.~Liu, M.~Long, and M.~I. Jordan, ``Bridging theory and algorithm
  for domain adaptation,'' \emph{arXiv preprint arXiv:1904.05801}, 2019.

\bibitem{guo2018multi}
J.~Guo, D.~J. Shah, and R.~Barzilay, ``Multi-source domain adaptation with
  mixture of experts,'' \emph{arXiv preprint arXiv:1809.02256}, 2018.

\bibitem{zhu2019aligning}
Y.~Zhu, F.~Zhuang, and D.~Wang, ``Aligning domain-specific distribution and
  classifier for cross-domain classification from multiple sources,'' in
  \emph{Proceedings of the AAAI Conference on Artificial Intelligence},
  vol.~33, 2019, pp. 5989--5996.

\bibitem{rakshit2019unsupervised}
S.~Rakshit, B.~Banerjee, G.~Roig, and S.~Chaudhuri, ``Unsupervised multi-source
  domain adaptation driven by deep adversarial ensemble learning,'' in
  \emph{German Conference on Pattern Recognition}.\hskip 1em plus 0.5em minus
  0.4em\relax Springer, 2019, pp. 485--498.

\bibitem{peng2019moment}
X.~Peng, Q.~Bai, X.~Xia, Z.~Huang, K.~Saenko, and B.~Wang, ``Moment matching
  for multi-source domain adaptation,'' in \emph{Proceedings of the IEEE
  International Conference on Computer Vision}, 2019, pp. 1406--1415.

\bibitem{DBLP:conf/aaai/ZhaoWZGLS0HCK20}
S.~Zhao, G.~Wang, S.~Zhang, Y.~Gu, Y.~Li, Z.~Song, P.~Xu, R.~Hu, H.~Chai, and
  K.~Keutzer, ``Multi-source distilling domain adaptation,'' in
  \emph{Proceedings of the AAAI Conference on Artificial Intelligence}, 2020,
  pp. 12\,975--12\,983.

\bibitem{saenko2010adapting}
K.~Saenko, B.~Kulis, M.~Fritz, and T.~Darrell, ``Adapting visual category
  models to new domains,'' in \emph{European conference on computer
  vision}.\hskip 1em plus 0.5em minus 0.4em\relax Springer, 2010, pp. 213--226.

\bibitem{gong2012geodesic}
B.~Gong, Y.~Shi, F.~Sha, and K.~Grauman, ``Geodesic flow kernel for
  unsupervised domain adaptation,'' in \emph{2012 IEEE Conference on Computer
  Vision and Pattern Recognition}.\hskip 1em plus 0.5em minus 0.4em\relax IEEE,
  2012, pp. 2066--2073.

\bibitem{venkateswara2017deep}
H.~Venkateswara, J.~Eusebio, S.~Chakraborty, and S.~Panchanathan, ``Deep
  hashing network for unsupervised domain adaptation,'' in \emph{Proceedings of
  the IEEE Conference on Computer Vision and Pattern Recognition}, 2017, pp.
  5018--5027.

\bibitem{peng2018visda}
X.~Peng, B.~Usman, N.~Kaushik, D.~Wang, J.~Hoffman, and K.~Saenko, ``Visda: A
  synthetic-to-real benchmark for visual domain adaptation,'' in
  \emph{Proceedings of the IEEE Conference on Computer Vision and Pattern
  Recognition Workshops}, 2018, pp. 2021--2026.

\bibitem{he2016deep}
K.~He, X.~Zhang, S.~Ren, and J.~Sun, ``Deep residual learning for image
  recognition,'' in \emph{Proceedings of the IEEE conference on computer vision
  and pattern recognition}, 2016, pp. 770--778.

\end{thebibliography}

\end{document}